\documentclass{article}

    \PassOptionsToPackage{numbers, compress}{natbib}

\usepackage[preprint]{neurips_2026}

\usepackage[utf8]{inputenc} %
\usepackage[T1]{fontenc}    %
\usepackage{hyperref}       %
\usepackage{url}            %
\usepackage{booktabs}       %
\usepackage{amsfonts}       %
\usepackage{nicefrac}       %
\usepackage{microtype}      %
\usepackage{xcolor}         %

\title{Representation Learning for Sample-Efficient CATE Estimation by Leveraging Multiple Outcomes}

\author{
Maitreyi Swaroop\textsuperscript{1}\thanks{Corresponding author: \texttt{mswaroop@andrew.cmu.edu}},
  Shikha Bhat\textsuperscript{1},
  Samantha Rodriguez\textsuperscript{2},
  Tamar Krishnamurti\textsuperscript{2},
  Bryan Wilder\textsuperscript{1}
  \\[4pt]
  \textsuperscript{1}Machine Learning Department, Carnegie Mellon University \\
  \textsuperscript{2}Department of General Internal Medicine, University of Pittsburgh
}
\usepackage{microtype}
\usepackage{graphicx}
\usepackage{subcaption}
\usepackage{booktabs} 
\usepackage{hyperref}
\usepackage{tikz}
\usetikzlibrary{shapes,decorations,arrows,calc,arrows.meta,fit,positioning}
\tikzset{
    -Latex,auto,node distance =1 cm and 1 cm,semithick,
    state/.style ={ellipse, draw, minimum width = 0.7 cm},
    point/.style = {circle, draw, inner sep=0.04cm,fill,node contents={}},
    bidirected/.style={Latex-Latex,dashed},
    el/.style = {inner sep=2pt, align=left, sloped}
}

\usepackage{amsmath}
\usepackage{amssymb}
\usepackage{mathtools}
\usepackage{amsthm}
\usepackage{bm}

\usepackage[capitalize,noabbrev]{cleveref}

\theoremstyle{plain}
\newtheorem{theorem}{Theorem}[section]

\newtheorem{lemma}[theorem]{Lemma}

\theoremstyle{definition}

\newtheorem{assumption}[theorem]{Assumption}
\crefname{assumption}{assumption}{assumptions}
\theoremstyle{remark}
\newtheorem{remark}[theorem]{Remark}

\usepackage{bm}
\newcommand\E{\mathbb{E}}

\newcommand\Rd{\mathbb{R}^{d}}

\newcommand\Rm{\mathbb{R}^{m}}

\DeclareMathOperator*{\argmax}{arg\,max}
\DeclareMathOperator*{\argmin}{arg\,min}

\usepackage{xcolor}
\usepackage{threeparttable}
\definecolor{SafeGreen}{RGB}{0,100,0}

\begin{document}

\maketitle

\begin{abstract}
Estimating conditional average treatment effects (CATE) enables efficient targeting of interventions, but many applications have limited experimental samples, making it difficult to estimate heterogeneous effects from high-dimensional covariates. In such events, policy-makers and medical 
practitioners often succumb to the curse of dimensionality or apply off-the-shelf dimension reduction methods that may not preserve treatment heterogeneity. Yet these domains often come with large historical datasets measuring a wide range of outcomes -- a source of supervision that is rarely exploited in practice. Following causal representation learning, we hypothesize that such domains with high-dimensional covariates have lower-dimensional underlying dynamics. We can thus leverage the diverse outcomes measured in historical data to learn a lower-dimensional representation of the covariates. Theoretically, we prove that when the auxiliary outcomes satisfy a set of surrogacy conditions and the representation retains relevant covariate information, the original CATE is identified when the high-dimensional covariates are replaced by the learned representation. Combined with existing dimension-dependent rates for CATE estimation, the result implies greater sample-efficiency on the same experimental sample. Additionally, we characterize the bias-variance tradeoff when the assumptions do not hold perfectly, and show that the representation-based estimator can still achieve lower error when the reduction in estimator variance outweighs the bias due to compression. Empirically, we evaluate the method on synthetic data and semi-synthetic medical data.
\end{abstract}

\section{Introduction}\label{sec:intro}
 Treatment effects often vary substantially across individuals, and identifying those who benefit most enables efficient allocation of limited resources \cite{athey2025machine}. Estimating the conditional average treatment effect (CATE) is therefore central to designing well-targeted interventions. 
 However, CATE estimation faces a fundamental challenge: randomized experiments are often small-scale, while the covariates needed to capture meaningful heterogeneity are high-dimensional. 
 This curse of dimensionality often leads researchers to abandon data-driven heterogeneity discovery in favor of manually specifying subgroups of interest \cite{athey2016recursive, chernozhukov2018generic,varadhan2013framework}. 
In machine learning more broadly, pretraining has emerged as a solution to sample-limited tasks \cite{kolesnikov2020big,brown2020language,radford2021learning}. The insight to learn representations from large, related datasets that transfer effectively to specific use cases  has transformed many areas of machine learning \cite{bommasani2022opportunitiesrisksfoundationmodels}. Yet, application to treatment effect estimation remains limited \cite{nilforoshan2023zero,liu2024cure,li2024large}, and often confined to specialized settings where covariates are images or text \cite{veitch2020adapting,feder2022causal,jiang2023estimating}. 

Outside of treatment effect estimation, causal representation learning (CRL) \cite{scholkopf2021toward} studies how high-dimensional observations arise from lower-dimensional causal variables, often with the goal of recovering those variables up to an equivalence class. We build upon this perspective, but use representation learning to aid statistical estimation of causal quantities, instead of identifying the ground-truth causal variables. Recent work on actionable prediction \cite{liu2024actionability} supports the view that effective intervention requires measuring latent states that determine treatment heterogeneity (rather than simply optimizing single-outcome prediction). For example, effects of a mental health intervention may be driven by social determinants (access to family or community support), biological factors (genetic predisposition or chronic health conditions), and other stressors (food security or employment status). The role of these latent factors in driving health outcomes is well-studied in the healthcare and social sciences literature \citep{bollen2002latent,phelan2010social,adler2010health}. While these states are often not directly observed, EHR data provides rich observational samples where these latent factors are (noisily) reflected across many measured covariates and outcome indicators.
 
We propose that the diverse outcomes often measured in administrative or historical data can help uncover this latent structure and enable more accurate estimation of heterogeneous treatment effects. Theoretically, we use the surrogacy framework of \citet{athey2019surrogate} to prove that when the auxiliary outcomes satisfy a set of surrogacy conditions, the learned representation will suffice to capture heterogeneous treatment effects on the target outcome.
To realize this, we learn a bottleneck representation $\phi(X)$ on large-scale historical data which captures information between the covariates $X$ and $k$ auxiliary outcomes $(Y^{(1)},\!\ldots\!, Y^{(k)})$. These outcomes serve as indicators of underlying dynamics that predict heterogeneity in the target outcome $Y^{*}$.
We then use data from a randomized experiment of the intervention of interest to learn treatment effects on a target outcome $Y^{*}$ as a function of the lower-dimensional $\phi(X)$ instead of the original $X$. Combining our identification result with the CATE estimation error rates of \citet{kennedy2023towards}, our method provides gains in sample-efficiency as CATE estimation rates degrade with covariate dimension. 

We emphasize that we do not require surrogacy conditions to be exactly satisfied for our method to be useful (just as pretraining yields sample efficiency gains for many tasks even if the pretrained representations are not perfectly adapted to the target domain). When the assumptions do not hold perfectly, the estimand {coarsens} to the average treatment effect among units with covariates sharing the same representation. The total error in CATE estimation admits the familiar bias-variance decomposition of squared bias (due to coarsening), and variance (due to the finite experimental sample). In practice, the total error is often minimized by accepting some nonzero bias in this tradeoff. Thus even when the conditions only hold approximately, we find that our lower dimensional representation achieves lower total error than the raw covariates.

Empirically, we demonstrate our method on synthetic and semi-synthetic medical data, finding substantial improvements in sample efficiency for CATE estimation.   

The paper is organized as follows: \Cref{sec:related-work} discusses related work from causal inference literature. \Cref{sec:formulation} presents the formal problem setup and introduces notation. \Cref{sec:main-result} provides the main identification result of the CATE estimator using the learned representation. \Cref{sec:experiments} provides empirical evaluation of our method with both synthetic and real datasets. \Cref{sec:conclusion} concludes.

\section{Related Work}\label{sec:related-work}
\paragraph{Heterogeneous treatment effect estimation} 
CATE estimation has advanced through meta-learner frameworks: the S-learner and T-learner estimate potential outcomes, the X-learner \cite{kunzel2019metalearners} is particularly effective for unbalanced treatment groups, and the R-learner \cite{nie2021quasi} builds on Double Machine Learning \cite{chernozhukov2018double} for robustness to nuisance estimation errors. Parallel to these frameworks, causal forests \cite{wager2018estimation}, extending causal trees \cite{athey2016recursive}, are standard for non-parametric CATE estimation. 
However, in high-dimensional settings with limited experimental samples, convergence rates degrade \cite{curth2024using}, necessitating methods that learn lower-dimensional representations that extract relevant structure from the original covariates. 
Existing remedies such as sufficient dimension reduction \citep{ghosh2021sufficient}, or integrating out nuisance covariates over a pre-selected subset \cite{fan2022estimation}, require manual selection or linearity assumptions, motivating the need for methods that automatically learn lower-dimensional representations.
\paragraph{Causal representation learning}
Work at the intersection of representation learning and causal inference falls into two categories. The first learns representations for treatment effect estimation from observational data: balanced representations to reduce the covariate shift between treatment groups \cite{johansson2016learning,shalit2017estimating}, latent variable models for unobserved confounding and selection bias \citep{louizos2017causal, hassanpour2019learning,zhang2020treatment}, shared representations between propensity and outcome models \citep{shi2019adapting}, and energy-based representations \cite{zhang2022identifiable}.
However, these methods operate within a single dataset and do not leverage the multi-outcome structure available in historical data. 
The second category (the more common interpretation of causal representation learning) aims to recover latent causal variables from high-dimensional observations \citep{scholkopf2021toward}. This includes identifiable nonlinear ICA \citep{hyvarinen2016unsupervised, khemakhem2020variational}, and work on identifying latent causal structure under interventions \citep{brehmer2022weakly}. These methods establish conditions under which latent variables can be recovered, but do not address treatment effect estimation.
\paragraph{Leveraging auxiliary information}
Our approach leverages the availability of multiple auxiliary outcomes to uncover latent structure. While this setting naturally encompasses the \textit{surrogacy framework} where intermediate outcomes mediate the effect of treatment on a long-term target \cite{Prentice1989SurrogateEI, athey2019surrogate}, our motivation differs. Prior surrogacy literature addresses two main problems, (1) \textit{identification} - where long-term outcomes are unobserved and surrogates enable estimation of otherwise unidentifiable effects \cite{athey2019surrogate, imbens2025long,cai2024long}, including recent extensions to heterogeneous treatment effects \cite{cai2025long}, and (2) \textit{variance reduction}, where surrogates improve efficiency when the primary outcome is partially observed \cite{kallus2025role}. However, none of this work uses surrogacy structure to learn representations that transfer to new experiments for sample-efficient CATE estimation. Our method applies even when the primary outcome is fully observed, and where identification is not a concern but sample efficiency is. \\
Related directions include \textit{data fusion}, which combines experimental data with larger observational datasets to address selection bias or increase statistical power \cite{yang2025data}, and \textit{transfer learning} for causal inference, which focuses on generalizing treatment effects across populations with different covariate distributions \cite{pearl2022external, kunzel2018transfer, bica2022transfer}. Our setting differs from both as we do not assume that the treatment of interest appears in the historical data at all, nor that the experimental and historical samples come from different populations. Instead, we assume rich historical data from the same population, collected before the intervention was available.

\paragraph{Our contribution}
The methods discussed above address concerns about the validity of the observational or experimental sample, such as unmeasured confounding, selection bias, or covariate shift between populations. %
Our goal differs in that we target the setting where the experimental design is internally valid but statistically underpowered due to limited samples and high-dimensional covariates, with experimental and historical covariates drawn from the same population. 
We use multi-outcome historical data to learn a representation that helps transfer this structural knowledge to the experimental sample, thereby reducing the effective dimensionality of the inference problem.
We thus enable accurate CATE estimation on small experimental samples where standard methods would suffer from the curse of dimensionality. 

\section{Problem Formulation}\label{sec:formulation}
We consider treatment effect estimation in a setting where experiments are expensive or limited in sample size. To overcome this limitation, we leverage a larger historical dataset that captures the underlying outcome dynamics, to learn a lower dimensional representation of the covariates.

\paragraph{Datasets} We have access to two datasets (or \textit{populations} $P\in \{H, E\}$):
\begin{itemize}
    \item \textit{Historical dataset ($P=H$):} A large sample (size $N_H$) collected prior to the experiment. It comprises covariates $X\in \Rd$, $k$ auxiliary outcomes $S=(Y^{(1)},\dots,Y^{(k)})$ (\textit{surrogates}, following the surrogacy literature), and a target outcome $Y^{*}$. 
    \item \textit{Experimental dataset ($P=E$):} A relatively smaller sample (size $N_E$) collected from a randomized controlled trial (RCT). It comprises covariates $X$, assigned treatments $T$, and auxiliary outcomes $S$.
\end{itemize}

We do not assume that units were assigned a treatment in the historical dataset. For example, administrative data may predate an intervention, as is common in policy and health settings. We also do not require the primary outcome $Y^*$ to be immediately observable in the experimental sample - for instance, when $Y^*$ is a long-term outcome and treatment effects must be estimated before $Y^*$ can be measured. 
Under Assumptions \ref{assump:surrogacy}-\ref{assump:comparability}, observing $Y^*$ in the experimental sample is not required for identification. However, when $Y^*$ is observed, our reliance on surrogacy assumptions weakens. The conditions are sufficient to ensure that $\phi(X)$ captures the full CATE, but are not necessary to estimate a valid causal effect on $Y^{*}$. 

\paragraph{Estimand}
Our goal is to learn a lower-dimensional representation $\phi: \Rd\to \Rm$ of the covariates $X$ where $m\ll d$, using the observations $(X_{i}, S_{i}, Y^{*}_{i}, P_i=H)_{i=1}^{N_H}$. Let $Y(t)$ denote the potential outcome in treatment $t\in \{0,1\}$ (for both the target and the auxiliary outcomes). We will then use $\phi$ to estimate the CATE on the target outcome in the experimental dataset $(X_{i}, T_{i}, S_{i}(T_{i}), P_{i}=E)_{i=1}^{N_E}$. Our target estimand is 
\begin{equation}
\tau(x) = \mathbb{E}[Y^{*}(1)-Y^{*}(0)|X=x, P=E]
\end{equation}
which our method will learn by estimating
\begin{equation}
\mathbb{E}[Y^{*}(1)-Y^{*}(0)|\phi(X), P=E].
\end{equation}

\paragraph{Assumptions}
We organize our assumptions from the causal inference literature into two categories: (1) standard identification assumptions, which we will assume hold throughout and (2) assumptions from the surrogacy framework of Athey et al. \cite{athey2019surrogate}, which we will use to analyze when pretrained representations are lossless, but for which we will also study estimation error under violations. 
\paragraph{\textit{Standard causal inference assumptions.}} (Standard in treatment effect estimation literature).
\begin{assumption}[Stable Unit Treatment Value Assumption (SUTVA)]\label{assump:sutva}
    We assume that the SUTVA \cite{rubin1980sutva} holds, ensuring well-defined potential outcomes. 
\end{assumption}
\begin{assumption}[Unconfoundedness/Ignorability]\label{assump:unconfounded}
\[(X, Y^{(j)}(1),Y^{(j)}(0)) \perp T | P=E \quad\text{for all outcomes } j \text{ (auxiliary and target).}\] This assumption is satisfied by design since $P=E$ is a randomized controlled trial. We adopt unconditional ignorability as our primary assumption. The extension to stratified randomization, where ignorability holds conditional on a known subset $W \subset X$, is discussed in Appendix~\ref{app:stratified}.
\end{assumption}  
\begin{assumption}[Overlap] \label{assump:overlap} 
\begin{itemize}
    \item[(i)] $0<\Pr[T=1 \mid X, P=E] < 1$ for all $X$.
    \item[(ii)] The support of the experimental covariates is contained within the historical support, i.e., $\text{supp}(P_E(X)) \subseteq \text{supp}(P_H(X))$.
\end{itemize}
\end{assumption}
\paragraph{\textit{Surrogacy assumptions.}} (From the surrogacy framework of \citet{athey2019surrogate}).
\begin{assumption}[Surrogacy]\label{assump:surrogacy}
$T \perp Y^{*} \mid S, X, P=E$.
\end{assumption}
\begin{assumption}[Comparability]\label{assump:comparability} $Y^{*} \perp P \mid S, X$

\textit{Surrogacy} requires that $S$ fully mediates the treatment effect on $Y^{*}$, while \textit{comparability} ensures the conditional distribution of $Y^{*}$ given $(S, X)$ is consistent across populations.
\end{assumption}

\section{Methodology}\label{sec:main-result}
\begin{figure}[t]
    \centering
    \includegraphics[width=\linewidth]{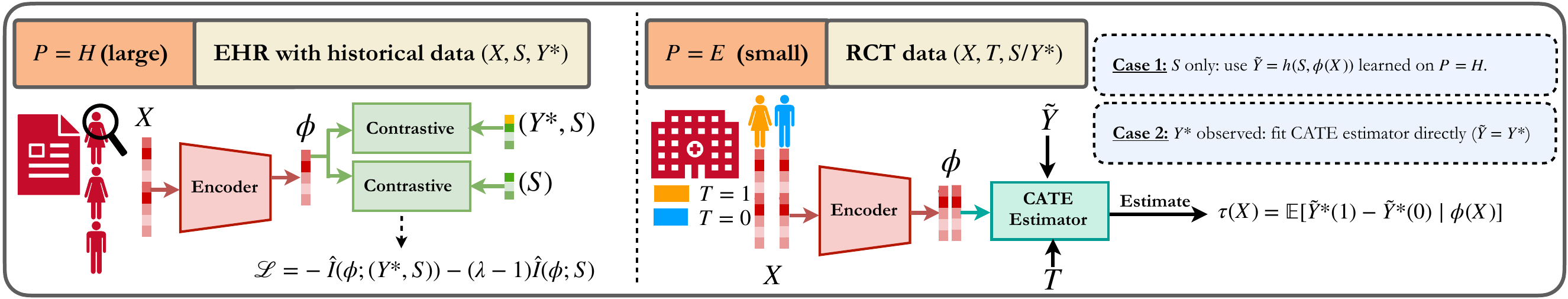}
    \caption{\textbf{Method overview.} On historical data we train $\phi$ via contrastive lower bounds on $I(\phi;(Y^*,S))$ and $I(\phi;S)$ (Equation~\ref{eq:objective_function}). On the RCT we feed $(\phi(X), T, \tilde Y)$ to a CATE meta-learner, with $\tilde Y = h(S, \phi(X))$ when $Y^*$ is unobserved (Case 1) or $\tilde Y = Y^*$ when it is observed (Case 2).}
    \label{fig:method-overview}
\end{figure}
In this section, we establish the conditions under which a learned representation $\phi(X)$ suffices to identify heterogeneous treatment effects, and derive a training objective that encourages the representation to retain outcome-relevant information while discarding irrelevant variation in $X$.

Before specifying what $\phi$ must satisfy, we observe that ignorability is preserved under any compression of $X$ (see Lemma~\ref{lem:ignorability} in Appendix~\ref{app:ignorability} for the formal statement and proof). This distinguishes our setting from observational representation learning \citep{johansson2016learning,shalit2017estimating}, where $\phi$ must preserve confounders to maintain ignorability, whereas randomization removes this requirement in our setting. We now turn to the question of what additional structure $\phi$ must capture for sufficiency.

\subsection{Representation requirements}\label{sec:rep_requirements}
The following assumptions specify the conditions a representation must satisfy to preserve all the information relevant for CATE estimation. We use this to motivate our training objective (\Cref{sec:objective_function}).

\begin{assumption}[Sufficiency]\label{assump:sufficiency}
\begin{itemize}
    \item[(i)] $Y^{*}\perp X|\phi(X),S,P=H$
    \item[(ii)] $S\perp X | \phi(X), T, P=E$
\end{itemize}
Condition (i) requires that, given representation $\phi(X)$ and surrogates $S$, the full covariates $X$ provide no additional information about the target outcome $Y^{*}$. Condition (ii) requires that, given $\phi(X)$ and treatment $T$, the covariates provide no additional information about the surrogates. Together, these ensure $\phi(X)$ captures all covariate information relevant for predicting treatment effect heterogeneity.
\end{assumption}

\begin{remark}[Testability and proxy objective]\label{rem:testability}
     Condition (i) is testable on historical data; Condition (ii) must hold for all treatment levels. Since historical data contains only controls ($T=0$), we cannot enforce this condition for treated units during representation learning; we instead optimize a proxy objective (\Cref{sec:objective_function}). The full condition can be verified post-hoc on experimental data, though statistical power is limited by the smaller experimental sample size.
\end{remark}

\subsection{Identification result}
Our main theoretical contribution shows that under Assumptions ~\ref{assump:sutva}-~\ref{assump:sufficiency}, the representation $\phi(X)$ learned from historical data suffices to identify the $X$-conditional heterogeneous treatment effects.%
\begin{theorem}[Identification of CATE under the learned representation]\label{thm:cate_id}
    Let $h(s,z) = \mathbb{E}[Y^{*}|S=s, \phi(x)=z, P=H]$. Under assumptions ~\ref{assump:sutva}-~\ref{assump:sufficiency},
    \begin{equation}\label{eq:our_cate_estimator}
        \tau(x) =\mathbb{E}\left[h(S, \phi(x)) \mid \phi(x), T=1, P=E\right] - \mathbb{E}\left[h(S, \phi(x)) \mid \phi(x), T=0, P=E\right]
    \end{equation}
\end{theorem}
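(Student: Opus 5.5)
We argue as in the surrogate-index identification of \citet{athey2019surrogate}, with two extra steps: swap $X$ for $\phi(X)$ inside the surrogate index, then swap it again in the outer conditioning. Write $\tau(x)=\mu_1(x)-\mu_0(x)$ with $\mu_t(x)=\mathbb{E}[Y^*(t)\mid X=x,P=E]$. By SUTVA (Assumption~\ref{assump:sutva}) and randomization (Assumption~\ref{assump:unconfounded}), with overlap (Assumption~\ref{assump:overlap}(i)) making the conditioning well defined, $\mu_t(x)=\mathbb{E}[Y^*\mid X=x,T=t,P=E]$. It therefore suffices to show, for each $t\in\{0,1\}$,
\[
\mathbb{E}[Y^*\mid X=x,T=t,P=E]=\mathbb{E}\left[h(S,\phi(x))\mid \phi(X)=\phi(x),T=t,P=E\right].
\]

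First, apply the tower property over $S$:
\[
\mathbb{E}[Y^*\mid X=x,T=t,P=E]=\mathbb{E}\big[\mathbb{E}[Y^*\mid S,X=x,T=t,P=E]\mid X=x,T=t,P=E\big].
\]
Surrogacy (Assumption~\ref{assump:surrogacy}) drops $T$ from the inner expectation. Comparability (Assumption~\ref{assump:comparability}) then replaces $P=E$ by $P=H$, giving $\mathbb{E}[Y^*\mid S,X=x,P=H]$.

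Second, sufficiency (i) (Assumption~\ref{assump:sufficiency}(i)) collapses this to $\mathbb{E}[Y^*\mid S,\phi(X)=\phi(x),P=H]=h(S,\phi(x))$. The historical regression must be well defined at the relevant $(S,X)$ values. The covariate part comes from Assumption~\ref{assump:overlap}(ii). For the surrogate values reached under treatment, we implicitly need support of $S\mid X,P=E$ to lie inside the historical support; I would state this as a technical caveat, since Assumption~\ref{assump:overlap}(ii) only covers $X$.

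Third, we now have $\mathbb{E}[h(S,\phi(x))\mid X=x,T=t,P=E]$, whose integrand depends on the data only through $S$. Sufficiency (ii) (Assumption~\ref{assump:sufficiency}(ii)) says the law of $S$ given $(X,T,P=E)$ equals its law given $(\phi(X),T,P=E)$. So the conditioning on $X=x$ can be coarsened to $\phi(X)=\phi(x)$, which yields the displayed identity. Subtracting the $t=0$ case from the $t=1$ case gives \eqref{eq:our_cate_estimator}. Lemma~\ref{lem:ignorability} is not strictly needed, since we only condition on $\phi$ at the very end, after $T$ has already been fixed.

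The main obstacle is the bookkeeping in the comparability step. Assumption~\ref{assump:comparability} is stated without conditioning on $T$, while historical units have no treatment. I would read it as equality of the conditional law of $Y^*$ given $(S,X)$ across $P$, applied to the experimental law after $T$ has already been removed by surrogacy. Stated that way, the order matters: first surrogacy within $P=E$, then comparability. The other delicate point is measure-theoretic. We must justify plugging the fixed value $\phi(x)$ into $h$ while conditioning on the event $\phi(X)=\phi(x)$, which I would handle by working with regular conditional distributions.
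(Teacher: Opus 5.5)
Your proposal is correct and follows the paper's proof step for step. You use SUTVA and randomization to pass to the observed outcome, the tower property over $S$, surrogacy and then comparability in the inner expectation, sufficiency (i) with the definition of $h$, and finally sufficiency (ii) to coarsen the outer conditioning from $X=x$ to $\phi(X)=\phi(x)$. Your extra remarks are not gaps in your argument: the support of $S$ under treatment is not covered by Assumption~\ref{assump:overlap}(ii), and the plug-in step needs regular conditional distributions, and the paper leaves both points implicit.
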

\begin{remark}
The function $h$ is learned from historical data where we observe $(X,S,Y^{*})$ with all units untreated. Under surrogacy (\ref{assump:surrogacy}), the structural equation for $Y^{*}$ depends on $(X,S)$ but not on $T$, allowing the same $h$ to apply to both treatment arms.
Comparability (\ref{assump:comparability}) ensures this relationship is the same across populations.
\end{remark}
\begin{remark}
Crucially, the outer expectation in \Cref{thm:cate_id} marginalizes over the distribution of $S$ conditional on $\phi(x)$. This implies that to estimate $\tau(x)$ for a specific unit, we only require their covariates $X$, not their post-treatment surrogate outcomes.
\end{remark}

\paragraph{Proof intuition.} 
The large historical sample identifies how $(X,S)$ predict the target outcome $Y^*$, while the randomized experiment sample identifies how the treatment changes the distribution of $S$ (and, if available, $Y^*$) conditional on $X$. Comparability allows the historical outcome relationship between $(X,S)$ and $Y^{*}$ to be used in the experimental sample, and surrogacy allows for treatment-induced changes in $S$ to be translated into changes in $Y^*$. The two sufficiency conditions justify replacing $X$ with $(\phi(X),S)$ in both steps: condition (i) ensures that $\phi(X)$ captures all covariate information needed to predict $Y^{*}$ from surrogates, so the outcome model $h$ loses nothing by conditioning on $\phi(X)$ instead of $X$; condition (ii) ensures that $\phi(X)$ captures all covariate information needed to predict surrogates given treatment, so estimating surrogate shifts conditional on $\phi(X)$ recovers the same quantity as conditioning on $X$. Notably, no treatment variation is needed to learn the representation and it is trained entirely on historical controls. 
Full proof in Appendix~\ref{app:cate_id_proof}.

\paragraph{Implications for treatment effect estimation}
The identification result suggests a natural two-stage estimation procedure (see Figure~\ref{fig:method-overview}). In the first stage, we use historical data to (i) learn the representation $\phi$ by optimizing Equation~\ref{eq:objective_function}, and (ii) fit the outcome model $h$ mapping $\phi(X), S$ to $Y^{*}$. In the second stage, we estimate treatment effects on the experimental sample using standard CATE meta-learners, but with a key modification that pseudo outcomes are regressed on $\phi(X)$ rather than the raw covariates $X$. This is where the sample efficiency gain is realized -- Theorem~\ref{thm:cate_id} shows that, under sufficiency, the original CATE can be estimated using $\phi(X)\in\mathbb{R}^m$ rather than $X\in\mathbb{R}^d$, with $m\ll d$. Under the conditions of Corollary~1 of \citet{kennedy2023towards}, the DR-Learner for a $\gamma$-smooth CATE then has pointwise error $O_p\!\left(N_E^{-\gamma/(2\gamma+m)}\right),$ compared with $O_p\!\left(N_E^{-\gamma/(2\gamma+d)}\right)$ when applied directly to $X$. Thus the representation improves the rate governing CATE estimation from the experimental sample.

\paragraph{Identification under sufficiency violation.}\label{sec:suff_violation}
As noted in Remark~\ref{rem:testability}, we cannot explicitly enforce Assumption~\ref{assump:sufficiency}(ii) during training: since historical data contains only controls ($T=0$), this assumption may fail at $T=1$ when treatment effect heterogeneity on $S$ depends on features of $X$ not predictive of $S$ at baseline. We show that under our remaining assumptions, the estimator continues to recover a valid causal effect, and define the \emph{$\phi$-averaged CATE} as follows:
\[
\tau_\phi(x) := \mathbb{E}[Y^*(1) - Y^*(0) \mid \phi(X) = \phi(x), P=E].
\]
This is the average treatment effect among units sharing representation $\phi(x)$. It coincides with $\tau(x)$ when $\tau$ is constant on level sets of $\phi$. When the representation compresses away features that drive heterogeneity, $\tau_\phi(x)$ averages $\tau$ over those features. Thus heterogeneity along directions $\phi$ retains is preserved and only heterogeneity along compressed directions is averaged away.

\begin{theorem}[Identification under violation of Assumption~\ref{assump:sufficiency}(ii)]\label{thm:phi_cate_id}
Under Assumptions~\ref{assump:sutva}--\ref{assump:comparability} and Assumption~\ref{assump:sufficiency}(i),
\begin{equation}\label{eq:phi_cate}
\tau_\phi(x) = \mathbb{E}[h(S, \phi(x)) \mid \phi(x), T=1, P=E] - \mathbb{E}[h(S, \phi(x)) \mid \phi(x), T=0, P=E].
\end{equation}
\end{theorem}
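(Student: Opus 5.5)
The plan is to work conditionally on $X$ first and only then average over the level set $\{\phi(X)=\phi(x)\}$. Assumption~\ref{assump:sufficiency}(ii) is no longer available, so conditioning on $\phi(X)$ must be done at the end rather than inside the surrogate step. Write $z=\phi(x)$.

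\textbf{Step 1: a pointwise identity.} For any $x'$ with $\phi(x')=z$, I would show
\[
\E[Y^*(t)\mid X=x',P=E]=\E\bigl[h(S,z)\mid X=x',T=t,P=E\bigr].
\]
The derivation is a chain of standard equalities.
\begin{itemize}
\item By Assumption~\ref{assump:unconfounded} and SUTVA, $\E[Y^*(t)\mid X=x',P=E]=\E[Y^*\mid X=x',T=t,P=E]$.
\item By the tower property, this equals $\E\bigl[\E[Y^*\mid S,X,T,P=E]\mid X=x',T=t,P=E\bigr]$.
\item Surrogacy (Assumption~\ref{assump:surrogacy}) drops $T$ from the inner expectation.
\item Comparability (Assumption~\ref{assump:comparability}) replaces $P=E$ by $P=H$.
\item Sufficiency (i) gives $\E[Y^*\mid S,X,P=H]=\E[Y^*\mid S,\phi(X),P=H]=h(S,\phi(X))$.
\end{itemize}
Overlap (Assumption~\ref{assump:overlap}) makes these conditional expectations well defined. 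Part (i) ensures the conditioning events have positive density under $T=t$. Part (ii) ensures $h$, defined on historical data, is evaluated on its support. Comparability is also needed at the support of $S$ under treatment in the experiment, so I would note this implicitly.

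\textbf{Step 2: average over the level set.} Integrate the pointwise identity over the law of $X$ given $\phi(X)=z,P=E$. The left side becomes $\E[Y^*(t)\mid\phi(X)=z,P=E]$ by the tower property. On the right, the inner law is that of $X$ given $T=t$. Randomization ($X\perp T\mid P=E$, i.e.\ Lemma~\ref{lem:ignorability}) shows that the law of $X$ given $\phi(X)=z,P=E$ equals the law given $\phi(X)=z,T=t,P=E$. The tower property then collapses the right side to $\E[h(S,z)\mid\phi(X)=z,T=t,P=E]$. Differencing over $t=1,0$ gives \eqref{eq:phi_cate}.

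\textbf{Main obstacle.} The delicate step is Step 2, where the conditioning on $T$ is exchanged within the level set. In a single line: $X\perp T\mid P=E$ implies $X\perp T\mid\phi(X),P=E$, because $\phi(X)$ is a function of $X$. Hence $\E[g(X)\mid\phi(X),T=t,P=E]=\E[g(X)\mid\phi(X),P=E]$ for $g(x')=\E[h(S,z)\mid X=x',T=t,P=E]$. This is exactly what replaces sufficiency (ii). The result also matches Theorem~\ref{thm:cate_id}: when $\tau$ is constant on level sets of $\phi$, we have $\tau_\phi=\tau$.
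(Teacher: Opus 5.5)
Your proposal is correct and follows the paper's proof essentially step for step: apply Steps 1--6 of the proof of Theorem~\ref{thm:cate_id} pointwise for each $x'$ in the level set $\{\phi(X)=\phi(x)\}$ (sufficiency (ii) is never needed there), then use randomization to drop the conditioning on $T=t$ when averaging over that level set. One small citation point: the fact $X\perp T\mid\phi(X),P=E$ follows directly from Assumption~\ref{assump:unconfounded} by weak union, as your last paragraph shows; it is not itself the content of Lemma~\ref{lem:ignorability}, which concerns the potential outcomes.
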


The proof (Appendix~\ref{app:sufficiency-violation-proof}) closely follows the proof of Theorem~\ref{thm:cate_id}, with the sole distinction between Theorem~\ref{thm:cate_id} and Theorem~\ref{thm:phi_cate_id} being the invocation of Assumption~\ref{assump:sufficiency}(ii).

\paragraph{Bias-variance tradeoff.}
Theorem~\ref{thm:phi_cate_id} explains why the representation remains useful even under imperfect sufficiency. As is frequently seen in representation learning, compressing $X$ may discard some task-relevant information. Here, the compression replaces $\tau(x)$ with the coarser, but still causally interpretable $\tau_{\phi}(x)$. Relative to $\tau(x)$, this loss of heterogeneity is realized in bias due to compression, while estimation treatment effects over $\phi(X)\in\mathbb{R}^m$ rather than $X\in\mathbb{R}^d$ can reduce experimental-stage variance. 
Let $D$ denote the experimental sample and $\widehat{\tau}_{\phi,D}$ the resulting representation-based estimator. When this estimator is centered at $\tau_\phi$, its error satisfies
\[
\operatorname{MSE}(\widehat{\tau}_{\phi,D},\tau) = \underbrace{\mathbb{E}_X\!\left[\{\tau_\phi(X)-\tau(X)\}^2\right]}_{\text{squared bias due to compression}} +
\underbrace{\mathbb{E}_X\!\left[\operatorname{Var}_D\{\widehat{\tau}_{\phi,D}(\phi(X))\}\right]}_{\text{experimental-stage variance}}.
\]
The representation therefore reduces the mean squared error whenever the squared compression bias is smaller than the reduction in variance. This bias-variance tradeoff is especially relevant when $m\ll d$. We formalize this decomposition in Appendix~\ref{app:bias-variance}.

\subsection{Objective for obtaining \texorpdfstring{$\phi$}{phi}}\label{sec:objective_function}
Having established that $\phi(X)$ identifies the treatment effect, we now derive an objective for learning $\phi$ from historical data based on Information Bottleneck principles. The representation must satisfy both conditions of Assumption~\ref{assump:sufficiency}; we translate each into an optimization target.

\paragraph{{Condition} $(i)$: {Outcome-relevant information}.} The requirement $Y^{*} \perp X | \phi(X),S, P=H$, equivalent to $I(Y^{*}; X | \phi(X),S)= 0$, where $I(\cdot;\cdot)$ denotes mutual information (MI) \citep{cover2005elements}. By the chain rule of mutual information, 
\[
I(Y^*; X \mid S) = I(Y^*; \phi(X) \mid S) + I(Y^*; X \mid \phi(X), S).
\]

Since $I(Y^{*}; X | S)$ is constant with respect to $\phi$, minimizing $I(Y^*; X \mid \phi(X), S)$ is equivalent to maximizing $I(Y^*; \phi(X) \mid S)$:
\begin{align*}
\phi^* = \argmin_{\phi} I(Y^*; X \mid \phi(X), S) = \argmax_{\phi} I(Y^*; \phi(X) \mid S).
\end{align*}

\paragraph{{Condition} $(ii)$: {Surrogate-relevant information}.} The requirement $S \perp X \mid \phi(X), T, P=E$ poses a challenge, since we learn $\phi$ on historical data ($P=H$) where $T = 0$ for all units. We instead impose the proxy condition $S \perp X \mid \phi(X), P=H$, equivalent to $I(S; X \mid \phi(X)) = 0$, which we encourage by maximizing $I(S; \phi(X))$:
\begin{align}
\phi^* = \argmax_{\phi} I(S; \phi(X)).
\end{align}
Appendix~\ref{app:sufficiency-violation} characterizes the consequences when this proxy is imperfectly satisfied.

\textbf{Final objective:} We combine both terms as a finite-sample scalarization:
\begin{align}\label{eq:objective_function}
\phi^* = \arg\max_{\phi} \quad I(Y^{*}; \phi(X)| S) + \lambda \cdot I(S; \phi(X))
\end{align}
where $\lambda > 0$ reflects differences in noise levels between $Y^*$ and $S$. By the chain rule, $I((Y^*, S); \phi(X)) = I(S; \phi(X)) + I(Y^*; \phi(X) \mid S)$, so $\lambda = 1$ corresponds to maximizing the joint MI $I(\phi(X); (Y^*, S))$.

\paragraph{Practical implementation.}
We implement $\phi$ as a feedforward encoder mapping $X \in \mathbb{R}^d$ to $\mathbb{R}^m$. The conditional term $I(Y^*; \phi(X)\mid S)$ has no direct sample-based estimator, so we apply the chain rule above to rewrite Equation~\ref{eq:objective_function} as \[I(\phi(X); (Y^*, S)) + (\lambda - 1)\cdot I(\phi(X); S)\] 
which decomposes the objective into two unconditional MI terms that admit standard variational lower bounds. We train the encoder jointly with two MI estimator heads: one for joint pair $(Y^*, S)$, and one for $S$ alone, instantiated as either InfoNCE \citep{oord2018representation} or MINE \citep{belghazi2018mutual}. As a simpler alternative, we can replace the MI estimators with prediction functions from $\phi(X)\to\hat S$ and $(\phi(X), S)\to\hat Y^*$ trained with BCE (binary) or MSE (continuous) losses. The prediction losses lower-bound the corresponding MI terms (see Appendix~\ref{app:mi_bounds}), making this substitution theoretically valid. We report results across all three estimators in Section~\ref{sec:experiments}.

\section{Empirical Results}\label{sec:experiments}
In this section, we provide empirical validation for our claims that using representations that explicitly capture the predictive relationship between $X$ and $S$ produces more sample-efficient estimates of the CATE than other standard methods. We provide our \href{https://github.com/the-maitrix/crl-hte/}{code (linked)}.

\paragraph{Baselines}
We compare our method against baselines representative of standard techniques employed by practitioners, who typically either use raw covariates or apply {off-the-shelf} dimensionality reduction on the covariates before CATE estimation. We consider three categories of baselines:

\begin{enumerate}
    \item \textbf{Raw covariates $X$}: using the high dimensional covariates $X$ to fit the CATE learner. The most straightforward approach in which no dimensionality reduction is applied.
    \item \textbf{Unsupervised reduction (PCA, ICA, Autoencoder)}: Principal Component Analysis \cite{pearson1901pca}, Independent Component Analysis \cite{jutten1991blind,hyvarinen2000independent}, and autoencoders \cite{baldi1989neural,kramer1991nonlinear} are widely adopted methods for handling high-dimensional covariates without requiring outcome supervision.
    \item \textbf{Supervised reduction (PLS)}:  Partial Least Squares \cite{wold1966estimation}, a supervised dimensionality reduction of $X$ using $S$ as targets. PLS has access to the same surrogate information as our method, serving as a near-linear analog of our method's surrogate-relevant objective; under joint Gaussianity the two recover related subspaces. 
\end{enumerate}

We use the same latent representation dimension across all dimensionality-reduction-based methods (ours and the baselines). Further implementation details are provided in Appendix ~\ref{app:empirical_extra}. 

\paragraph{Our method}
We evaluate three variants of our representation-learning objective. MI-MINE and MI-InfoNCE optimize conditional mutual-information objectives using MINE \citep{belghazi2018mutual} and InfoNCE \citep{oord2018representation} respectively, with the joint target $(Y,S)$. The prediction encoder learns $\phi(X)$ using prediction heads for $Y$ and $S$. All three methods map $X$ to an $m$-dimensional representation and otherwise use the same outcome-prediction and CATE-estimation pipeline. Architecture and other implementation details are provided in Appendix~\ref{app:empirical_extra}.

\paragraph{Procedure}
All methods follow a two-stage procedure. In the first stage, we use the historical sample to (i) learn a representation (where applicable), and (ii) train an outcome model. For all methods, the outcome model $h:\phi(X)\times S\to Y$ maps representations and surrogates to predicted outcomes $\hat{Y}$. For the raw-covariate baseline, we additionally fit $h(X)$ with no surrogate information. In the second stage, we apply $h$ to the experimental sample and use $\hat{Y}$ as the CATE target. All representation-based methods fit the CATE learner conditioned on the representation, while the raw-X baselines use $X$ directly in the CATE learner.
We use cross-fitted DML learners for CATE estimation. Further implementation details are in Appendix~\ref{app:empirical_extra}.

\paragraph{Metrics}
We report mean normalized PEHE \cite{hill2011bayesian,shalit2017estimating} and normalized top-20\% policy value (for which $0$ is random targeting and $1$ is oracle targeting). We repeat each experiment on 10 random data draws, using the same draws for every method.

\begin{figure*}[t]
    \centering
    \includegraphics[width=\textwidth]{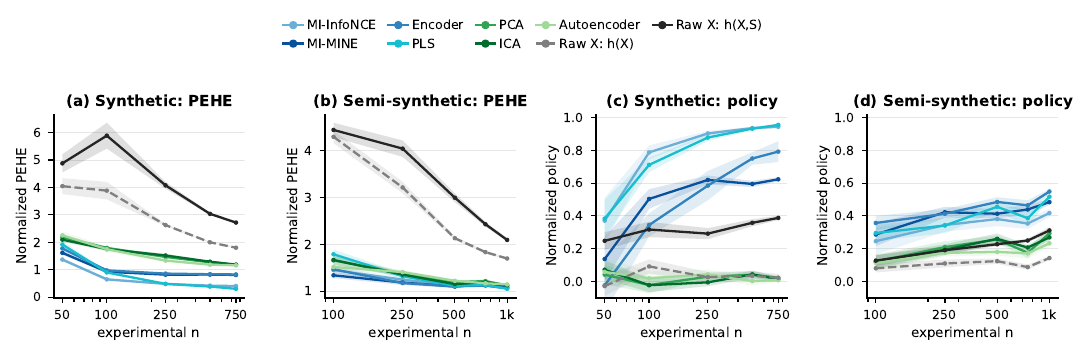}
    \caption{Normalized PEHE and normalized top-20\% policy value on the synthetic and perinatal semi-synthetic datasets at $\dim(\phi)=10$. Curves show means over 10 runs and the surrounding shaded regions show $\pm 1$ standard error. The two raw-$X$ curves differ only in whether the historical outcome model uses $S$.}
    \label{fig:main_results}
    \label{fig:synth_PEHE}
    \label{fig:upmc_PEHE}
    \label{fig:synth_policy}
    \label{fig:semisynth_policy}
\end{figure*}

\subsection{Synthetic dataset}\label{sec:exp_synth}

\paragraph{Description.}
We draw covariates $X \sim \mathcal{N}(\mathbf{0},\mathbf{I}_d)$ with $d=1000$. We define the ground-truth latent as a 10-dimensional vector $Z = X W^{\top}$, where each coordinate of $Z$ depends on a small subset of covariates through the sparse matrix $W$. We split the latent into two halves, $Z = (Z_s, Z_y)$ with $Z_s, Z_y \in \mathbb{R}^5$. In the main experiment, $Z_s$ determines the surrogates $S \in \mathbb{R}^{10}$ and the treatment effect on $S$, and we generate $Y$ as a noisy linear function of $S$. Since $Y$ and the treatment effect depend on $X$ only through $Z_s$, our sufficiency assumption holds exactly with a five-dimensional representation. 

In Appendices~\ref{app:suff-violation-empirical} and~\ref{app:delta-violation-empirical}, we study what happens when our assumptions do not hold by letting $Z_y$ influence the outcome and treatment effect. Further implementation details are in Appendix~\ref{app:synth_exp_extra}.

\paragraph{Experiment design.}
We generate $N_H = 10{,}000$ historical samples with $T \equiv 0$ and an experimental pool of $N_E = 5{,}000$ with $T \sim \mathrm{Bernoulli}(0.5)$. Encoders are trained on $H$ and CATE estimation is run on stratified subsamples at $n \in \{50,100,250,500,750\}$. We set $\dim(\phi)=10$ and use the predicted outcome $\widehat Y$ as the CATE target. Appendix~\ref{app:m-sweep-empirical} examines sensitivity to the choice of the bottleneck dimension by varying $\dim(\phi)\in{2,3,5,10,20,50}$ for a fixed DGP.
We report results averaged over 10 historical and experimental resamples from the same fixed DGP.

\paragraph{Results.}
Figure~\ref{fig:main_results}(a) reports normalized PEHE against $n$ for the DML learner. The supervised methods (ours and PLS) clearly outperform the unsupervised methods across all sample sizes, and both raw-$X$ variants have the worst performance: at $n=100$, MI-InfoNCE achieves normalized PEHE $0.650\pm0.059$, compared with $0.902\pm0.047$ for PLS, at least $1.750$ for the unsupervised baselines, and $3.883\pm0.319$ for raw $X$. Adding $S$ to the raw-$X$ outcome model does not help ($5.886\pm0.471$). The policy metric shows a similar small-sample trend: at $n=100$, MI-InfoNCE attains a top-20\% policy value of $0.788\pm0.043$, versus $0.091\pm0.043$ for raw $X$ with $h(X)$.

The linear DGP is well suited to PLS, which is accordingly the strongest baseline: MI-InfoNCE and PLS are effectively tied by $n=250$, and PLS is modestly ahead at larger $n$. That our method leads at the smallest samples and substantially outperforms every unsupervised baseline throughout confirms the importance of the outcome and surrogate relevant objective, while all lower dimensional methods outperforming both variants of the raw-$X$ baseline highlight the contribution of dimensionality reduction. Tabulated results across all sample sizes, the X-learner, and the true-experimental-$Y$ variant are in Appendix~\ref{app:synth_exp_extra} (Tables~\ref{tab:synth_pehe_norm_xl}--\ref{tab:synth_policy_norm_20_dml}).

\subsection{Semi-synthetic medical dataset}\label{sec:exp_semisynth}
\paragraph{Description.}
We construct a semi-synthetic dataset using real covariates from a perinatal depression study at a large academic medical center (anonymized), using real covariates and diagnosis trajectories with synthetic treatment assignment and potential outcomes. More details in Appendix~\ref{app:semi_synthetic}.

\paragraph{Cohorts and covariates.}
The historical cohort comprises $N_H=60{,}248$ pregnancies, and the experimental pool contains $N_E=11{,}747$ pregnancies from a perinatal mobile-app study. After preprocessing, both cohorts share $X\in\mathbb{R}^{340}$. The covariates include maternal demographics, obstetric history and pregnancy characteristics, health behaviors, clinical measurements, diagnoses during pregnancy, and health-care utilization. Further details are provided in Appendix~\ref{app:semi_synthetic}.

\paragraph{Auxiliary outcomes.}
For the auxiliary outcomes, we use diagnoses recorded in the EHR during pregnancy for conditions related to perinatal depression risk such as anxiety, depression, bipolar disorder, obsessive-compulsive disorder, trauma reactions, and substance-use disorders, along with conditions such as hypertension, diabetes, and autoimmune, cardiac, kidney, and liver conditions. Each condition is recorded at multiple points during pregnancy, giving 56 binary indicators; we drop those that are nearly always zero or one in the historical cohort, leaving 37 features. We summarize them with their first five principal components, $S\in\mathbb{R}^{5}$, standardizing and fitting the PCA on the historical cohort only. All of these diagnoses are observed in both cohorts.

\paragraph{Synthetic treatment and outcome.}
Treatment assignment is randomized with probability $1/2$ in the experimental sample. The synthetic treatment shifts $S$ along an outcome-relevant direction whose magnitude depends on $X$. Potential outcomes are then generated from the same outcome model $P(Y\mid X,S)$ in both cohorts, so there is no direct $T\to Y$ path. This makes randomization, surrogacy, and outcome-model comparability hold by construction. Appendix~\ref{app:semisynth_assumptions} assesses how well our assumptions hold on the semi-synthetic dataset.

\paragraph{Experiment design.}
We set $\dim(\phi)=10$ and train the representations and outcome models on $10{,}000$ historical observations. For each run, we reserve 5,000 experimental instances for testing and draw training samples of size $n\in\{100,250,500,750,1000\}$ from the remaining pool. We estimate the CATE using three-fold cross-fitted DML, replacing the experimental outcome with its prediction $\widehat Y$. We repeat the experiment over 10 random data draws and use the same draws for every method. Figure~\ref{fig:main_results}(b,d) compares all methods, including raw-$X$ baselines using $h(X)$ and $h(X,S)$. Tabulated results are in Appendix~\ref{app:semi_synthetic}, along with additional results for $\dim(\phi)=5$, and runs using the true experimental outcome, along with the remaining implementation details.

\paragraph{Results.}
As in the synthetic dataset, Figure~\ref{fig:main_results}(b,d) shows a clear advantage of dimension-reduction methods. At $n=100$, conditional MINE achieves a normalized PEHE of $1.340$, compared with more than $4.2$ for either raw-$X$ baseline. The gap narrows as the experimental sample grows but remains substantial: at $n=1000$, the prediction encoder and PLS both achieve approximately $1.06$, compared with $1.70$ for $h(X)$ and $2.10$ for $h(X,S)$. The policy results show a similar trend in relative performance. Appendix~\ref{app:semi_synthetic} reports more detailed results.

\section{Discussion}\label{sec:conclusion}
\paragraph{Contributions}
This paper aims to help ML practitioners in healthcare, policy, and industry who rely on RCTs and A/B tests to estimate heterogeneous effects of interventions but are limited by small experimental sample sizes. These settings often come with access to rich historical data: EHRs, administrative records, prior A/B tests. However, the potential of such data to aid causal inference is often overlooked. We provide theoretical justification, empirical evidence, and a systematic way to use representation learning to leverage such data, and to improve the statistical power of CATE estimation in small-scale experiments. We hope that this also motivates more deliberate collection of auxiliary outcomes in administrative and operational data, so that the structural signal needed for sample-efficient causal inference is available when new interventions are tested.

\paragraph{Limitations and future work}
Our identification result relies on surrogacy and sufficiency, and while our analysis shows that empirical gains are robust to imperfect satisfaction of these conditions -- sufficiency violations cost only granularity (the estimand coarsens to the average effect among units sharing $\phi(X)$), and surrogacy violations are sidestepped whenever $Y^*$ is observed in the experiment -- characterizing performance under more general failure modes remains open. Practitioners will also benefit from extensions of our work, such as fine-tuning $\phi$ on experimental data, allowing treated units in the historical sample, swapping the MLP encoder for pretrained image or text encoders, supporting multiple treatments and target outcomes, and extending to observational studies.

\bibliography{references}
\bibliographystyle{plainnat}

\newpage
\appendix
\crefname{section}{Appendix}{Appendices}
\Crefname{section}{Appendix}{Appendices}
\onecolumn
\section{Proof of \Cref{thm:cate_id}}\label{app:cate_id_proof} 
In this section we prove that under Assumptions~\ref{assump:sutva}-\ref{assump:sufficiency}, the CATE $\tau(x) = \mathbb{E}[Y^{*}(1) - Y^{*}(0) \mid X = x, P = E]$ is identified by Equation~\eqref{eq:our_cate_estimator}. Since $\tau(x)$ is a difference of conditional expectations, it suffices to show that for each treatment arm $t \in \{0,1\}$:
\[
\mathbb{E}[Y^{*}(t) \mid X = x, P = E] = \mathbb{E}[h(S, \phi(x)) \mid \phi(x), T = t, P = E]
\]
where $h(s,z) = \mathbb{E}[Y^{*} \mid S = s, \phi(X) = z, P = H]$. For brevity, we abbreviate $Y^*$ as $Y$.
\begin{proof}
We prove Equation~\eqref{eq:our_cate_estimator} for each treatment arm $t \in \{0,1\}$.
\begin{align*}
\mathbb{E}[Y(t) \mid X=x, P=E] 
    &= \mathbb{E}[Y \mid X=x, T=t, P=E] 
    && \text{(ignorability, SUTVA)}\\
    &= \mathbb{E}\big[\mathbb{E}[Y \mid S, T=t, X=x, P=E] \,\big|\, X=x, T=t, P=E\big] 
    && \text{(iterated expectations)}\\
    &= \mathbb{E}\big[\mathbb{E}[Y \mid S, X=x, P=E] \,\big|\, X=x, T=t, P=E\big] 
    && \text{(surrogacy, A\ref{assump:surrogacy})}\\
    &= \mathbb{E}\big[\mathbb{E}[Y \mid S, X=x, P=H] \,\big|\, X=x, T=t, P=E\big] 
    && \text{(comparability, A\ref{assump:comparability})}\\
    &= \mathbb{E}\big[\mathbb{E}[Y \mid S, \phi(x), P=H] \,\big|\, X=x, T=t, P=E\big] 
    && \text{(sufficiency (i), A\ref{assump:sufficiency})}\\
    &= \mathbb{E}\big[h(S, \phi(x)) \,\big|\, X=x, T=t, P=E\big] 
    && \text{(definition of $h$)}\\
    &= \mathbb{E}\big[h(S, \phi(x)) \,\big|\, \phi(x), T=t, P=E\big] 
    && \text{(sufficiency (ii), A\ref{assump:sufficiency})}.
\end{align*}
Taking the difference across $t \in \{0,1\}$ yields Equation~\eqref{eq:our_cate_estimator}.
\end{proof}

\subsection{Discussion of proof steps}
\paragraph{Step 1 (Ignorability).} By randomization of treatment assignment in the experimental sample, Assumptions~\ref{assump:sutva}--~\ref{assump:unconfounded} ensures that the potential outcome $Y(t)$ equals the observed outcome $Y$ when $T = t$. This is a standard identification step in any randomized experiment.

\paragraph{Step 2 (Tower property).} We introduce $S$ via the law of iterated expectations.

\paragraph{Step 3 (Surrogacy).} Assumption~\ref{assump:surrogacy} states $T \perp Y \mid S, X, P = E$. Treatment is conditionally independent of the outcome given surrogates and covariates, so we drop $T$ from the inner expectation. The outer expectation retains $T$, which governs the distribution of $S$.

\paragraph{Step 4 (Comparability).} Assumption~\ref{assump:comparability} states $Y \perp P \mid S, X$. The conditional distribution of $Y$ given $(S, X)$ is invariant across populations, allowing us to switch from $P=E$ to $P=H$ in the inner expectation.

\paragraph{Step 5 (Sufficiency (i)).} Assumption~\ref{assump:sufficiency}(i) states $Y \perp X \mid \phi(X), S, P = H$. Conditional on the representation $\phi(X)$ and surrogates, the full covariates are redundant for predicting $Y$.

\paragraph{Step 6 (Outcome model).} We define $h(s, z) = \mathbb{E}[Y \mid S = s, \phi(X) = z, P = H]$.

\paragraph{Step 7 (Sufficiency (ii)).} Assumption~\ref{assump:sufficiency}(ii) states $S \perp X \mid \phi(X), T, P = E$. Conditional on the representation and treatment, the full covariates are redundant for predicting $S$, so the outer expectation depends on $X$ only through $\phi(X)$.

\paragraph{Combining.}
We obtain:
\[\mathbb{E}[Y(t) \mid X = x, P = E] = \mathbb{E}[h(S, \phi(x)) \mid \phi(x), T = t, P = E]\]
The CATE is:
\[\tau(x) = \mathbb{E}[h(S, \phi(x)) \mid \phi(x), T = 1, P = E] - \mathbb{E}[h(S, \phi(x)) \mid \phi(x), T = 0, P = E]\]
\paragraph{Summary.} 
 Our main result specifies what information the representation must retain from the covariates, and what can be ignored (Steps 5 and 7). It also combines historical and experimental data in the following way: historical data is used to fit the outcome model $h$, which captures how the surrogates and baseline covariates jointly predict the outcome. Experimental data is used to estimate how treatment shifts the distribution of surrogates conditional on the representation $\phi(x)$. Finally, surrogacy ensures that treatment affects $Y$ only through $S$ and, along with comparability, thus allows us to use the historical outcome model to estimate the CATE in the experimental sample. 

\section{On ignorability assumption}\label{app:ignorability-amin}
\subsection{Ignorability under compression}\label{app:ignorability}

We formalize and prove the claim that in randomized experiments, ignorability (Assumption~\ref{assump:unconfounded}) is preserved by any function of the covariates -- including any learned representation $\phi(X)$. For brevity, we abbreviate $Y^*$ as $Y$.

\begin{lemma}[Ignorability under compression]\label{lem:ignorability}
Under Assumption~\ref{assump:unconfounded}, for any measurable $\phi: \mathbb{R}^d \to \mathbb{R}^m$,
\[
T \perp (Y(0), Y(1)) \mid \phi(X), P=E.
\]
\end{lemma}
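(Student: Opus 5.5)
The plan is to derive the claim from Assumption~\ref{assump:unconfounded} using only the basic properties of conditional independence. The key observation is that Assumption~\ref{assump:unconfounded} gives joint independence of $T$ from the whole vector $(X, Y(0), Y(1))$ within $P=E$, not merely independence given $X$. Since $\phi(X)$ is a measurable function of $X$, the vector $(\phi(X), Y(0), Y(1))$ is a measurable function of $(X, Y(0), Y(1))$. Independence is preserved under measurable maps, so
\[
T \perp (\phi(X), Y(0), Y(1)) \mid P=E .
\]

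Next I will apply the weak union property of conditional independence: $A \perp (B, C)$ implies $A \perp C \mid B$. With $A=T$, $B=\phi(X)$, $C=(Y(0),Y(1))$, all within $P=E$, this gives the claimed $T \perp (Y(0),Y(1)) \mid \phi(X), P=E$.

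For a measure-theoretic verification, I will take bounded measurable $f$ and $g$ and compute
\[
\mathbb{E}[f(T)\,g(Y(0),Y(1)) \mid \phi(X)] = \mathbb{E}[f(T)]\,\mathbb{E}[g(Y(0),Y(1)) \mid \phi(X)] .
\]
This follows by conditioning on $\sigma(\phi(X))$ and testing against indicators $\mathbf{1}\{\phi(X)\in B\}$, using joint independence to factor $\mathbb{E}[f(T)\,g\,\mathbf{1}_B]=\mathbb{E}[f(T)]\,\mathbb{E}[g\,\mathbf{1}_B]$. It also shows $\mathbb{E}[f(T)\mid\phi(X)]=\mathbb{E}[f(T)]$, so the right-hand side equals the product of the two conditional expectations.

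I do not expect a real obstacle; the one point needing care is that the assumption is stated as joint (unconditional) independence. If one only had $T \perp (Y(0),Y(1)) \mid X$, coarsening to $\phi(X)$ could fail. So I will explicitly invoke the inclusion of $X$ in the independent block, and note that under stratified randomization (Appendix~\ref{app:stratified}) $\phi$ would need to retain $W$.
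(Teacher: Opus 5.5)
Your proposal is correct and follows the paper's own proof: push the joint independence $T \perp (X, Y(0), Y(1)) \mid P=E$ through the measurable map to obtain $T \perp (\phi(X), Y(0), Y(1)) \mid P=E$, then apply weak union. Your test-function verification is a valid, self-contained check of the weak-union step, which the paper simply cites from Dawid (1979).
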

\begin{proof}
Assumption~\ref{assump:unconfounded} gives $T \perp (X, Y(0), Y(1)) \mid P=E$. Since $\phi$ is a measurable function of $X$, this implies $T \perp (\phi(X),Y(0),Y(1)) \mid P=E$. By the Weak Union axiom for conditional independence \citep{dawid1979conditional}, $T \perp (Y(0), Y(1)) \mid \phi(X), P=E$ for any measurable $\phi$.
\end{proof}

This contrasts with observational representation learning \citep{johansson2016learning, shalit2017estimating}, where ignorability holds conditional on $X$, and a learned $\phi$ must explicitly retain the variables responsible for that conditioning. In our setting, randomization makes treatment unconditionally independent of all covariates and potential outcomes, so any function of $X$ (including $\phi(X)$) preserves ignorability. The representation can therefore be optimized for information relevant to the target outcome and surrogates (Section~\ref{sec:objective_function}), without a separate requirement to retain information for confounding adjustment.

We discuss below the extension to stratified randomization, where ignorability holds only conditional on a known stratification subset $V \subset X$.

\subsection{Extension to stratified randomizations}\label{app:stratified}
Assumption~\ref{assump:unconfounded} corresponds to simple randomization, as seen in standard RCTs. Under stratified randomization, treatment is assigned independently within strata defined by a known subset of covariates $V\subset X$. Within strata, treatment is independent of all other covariates and potential outcomes:
\begin{equation}\label{eq:joint_ignorability}
    T\perp (X, Y(1),Y(0))\mid V, P=E
\end{equation}
Since $V$ contains all covariates governing treatment assignment in a stratified RCT, treatment is independent of all other covariates and potential outcomes given $V$. Thus we can define: 
\[
\tilde{\phi}(X):= (V,\phi(X))
\]
where $\phi$ is trained on the full $X$ using the objective in Equation~\ref{eq:objective_function}. Concatenating $V$ ensures it is preserved regardless of whether the encoder retains it. Because $\phi(X)$ is a function of $X$, \eqref{eq:joint_ignorability} implies $T\perp (\phi(X),Y(1), Y(0)) \mid V,P=E$. Applying Weak Union axiom (as in Lemma~\ref{lem:ignorability}) yields $T\perp (Y(1), Y(0)) \mid V,\phi(X), P=E$, i.e. ignorability is preserved: $T \perp (Y(0), Y(1)) \mid \tilde{\phi}(X), P=E$. Theorem~\ref{thm:cate_id} applies with $\tilde\phi$ in place of $\phi$, provided Assumption~\ref{assump:sufficiency} is restated with respect to $\tilde{\phi}(X)$. Since $\tilde\phi(X)$ contains strictly more information than $\phi(X)$, the sufficiency conditions are no harder to satisfy than in the unstratified case. The proof of Theorem~\ref{thm:cate_id} is then identical with $\tilde\phi$ substituted throughout. The dimensional benefit is preserved: $\tilde{\phi}(X) \in \mathbb{R}^{m + |V|}$, and stratification variables are typically low-dimensional, so $m + |V| \ll d$.

\section{Sufficiency Violation}\label{app:sufficiency-violation}
\subsection{Identification under violation}\label{app:sufficiency-violation-proof}
In this section, we analyze the estimator from Theorem~\ref{thm:cate_id} when Assumption ~\ref{assump:sufficiency}(ii) is violated. Recall that (ii) requires $S \perp X \mid \phi(X), T, P=E$ for all treatment levels $t \in \{0,1\}$. Since the representation $\phi$ is learned on historical data where $T=0$ for all units, the proxy objective in Section ~\ref{sec:objective_function} can at best enforce this condition at $T=0$; it may fail at $T=1$ when treatment effect heterogeneity on $S$ depends on features of $X$ not predictive of $S$ at baseline.

We define the $\phi$-averaged CATE as:
\[
\tau_{\phi}(x):=\E[Y^*(1) - Y^*(0) \mid \phi(X) = \phi(x), P=E].
\]
This is the average treatment effect among units sharing representation $\phi(x)$. It coincides with $\tau(x)$ when $\tau$ is constant on level sets of $\phi$ (eg. trivially for bijective $\phi$). When the representation compresses away features that drive treatment effect heterogeneity, $\tau_{\phi}(x)$ averages $\tau$ over those features. That is, heterogeneity along directions $\phi$ retains is preserved, only heterogeneity along compressed directions is averaged over. For brevity, we shall abbreviate $Y^*$ as $Y$ in the following analysis.

From Theorem~\ref{thm:phi_cate_id}, under Assumptions~\ref{assump:sutva}--\ref{assump:comparability} and Assumption~\ref{assump:sufficiency}(i),
    \[\tau_{\phi}(x) = \E[h(S,\phi(x))\mid \phi(x), T=1, P=E]- \E[h(S,\phi(x))\mid \phi(x), T=0, P=E]\]

\begin{proof}
    For each treatment arm $t\in\{0,1\}$, we show
    \[
    \E[h(S, \phi(x)) \mid \phi(X) = \phi(x), T=t, P=E] = \E[Y(t) \mid \phi(X) = \phi(x), P=E].
    \]
    By the law of iterated expectations, we can expand $\E[h(S, \phi(x)) \mid \phi(X) = \phi(x), T=t, P=E]$ as
    \[\E\big[ \E[h(S, \phi(x)) \mid X, T=t, P=E]\big| \phi(X) = \phi(x), T=t, P=E\big]\].
    The outer expectation integrates over $X$ on the set $\{x:\phi(x')=\phi(x)\}$. On this set, steps 1--6 of the proof of Theorem~\ref{thm:cate_id} (which do not use Assumption~\ref{assump:sufficiency}(ii)) apply with $\phi(x')=\phi(x)$, giving
    \[
    \E[h(S, \phi(x)) \mid X = x', T=t, P=E] = \E[Y(t) \mid X = x', P=E].
    \]
    By Assumption~\ref{assump:unconfounded}, $T \perp X \mid P=E$, so the distribution of $X$ given $\phi(X) = \phi(x)$ is the same with or without conditioning on $T=t$. Therefore
    $\begin{aligned}
    \E[h(S, \phi(x)) \mid \phi(X) = \phi(x), T=t, P=E] = &\E[\E[Y(t) \mid X, P=E] \mid \phi(X) = \phi(x), P=E] \\ &= \E[Y(t) \mid \phi(X) = \phi(x), P=E].
    \end{aligned}$
    Taking the difference between $t \in \{0,1\}$ yields $\tau_\phi(x)$.
\end{proof}

\subsection{Bias-variance tradeoff under imperfect sufficiency}\label{app:bias-variance}

We now characterize the bias-variance tradeoff discussed in Section~\ref{sec:main-result}. We note that we are concerned with setting in which $N_H \gg N_E$. Our target is the original CATE $\tau(x)$. When Assumption~\ref{assump:sufficiency} does not hold perfectly, Theorem~\ref{thm:phi_cate_id} instead identifies 
\[
\tau_\phi(x):=\mathbb{E}[Y^*(1)-Y^*(0)\mid \phi(X)=\phi(x),P=E].
\]
By the law of iterated expectations, we have:
\[
\tau_\phi(X)=\mathbb{E}[\tau(X)\mid\phi(X),P=E].
\]

Given an experimental sample $D$ of size $N_E$, let $\widehat{\tau}_{\phi,D}(\phi(x))$ denote our representation-based estimator. Since the representation and outcome model are learned
from the much larger historical sample, we treat them as fixed and focus on the error arising in the experimental stage. We further assume that the experimental estimator is
centered at the coarsened CATE:
\[
\mathbb{E}_D\left[\widehat{\tau}_{\phi,D}(\phi(x))\right]=\tau_\phi(x).
\]

Our error thus lends itself to a familiar bias-variance decomposition,
\[
\mathbb{E}_{X,D}\left[\left\{\widehat{\tau}_{\phi,D}(\phi(X))-\tau(X)\right\}^2\right]
=
\underbrace{\mathbb{E}_X\left[\left\{\tau_\phi(X)-\tau(X)\right\}^2\right]}_{\text{squared bias due to coarsening}}
+
\underbrace{\mathbb{E}_X\left[\operatorname{Var}_D\left\{\widehat{\tau}_{\phi,D}(\phi(X))\right\}\right]}_{\text{variance due to the finite experimental sample}}.
\]
The cross term is zero because
\[
\mathbb{E}\left[\tau(X)-\tau_\phi(X)\mid\phi(X),P=E\right]=0.
\]

For comparison, an estimator that uses raw covariates directly would have no coarsening bias (assuming it is an unbiased estimator), and the error would consist entirely of variance due to the finite experimental sample. Let $V_X$ and $V_{\phi}$ denote the expected variances of the raw-covariate and representation-based estimators respectively. The representation-based estimator has lower mean squared error whenever 
\[
\mathbb{E}_X\left[\left\{\tau_\phi(X)-\tau(X)\right\}^2\right] < V_X(N_E)-V_\phi(N_E).
\]
Thus we still benefit from the representation-based estimator, even under imperfect sufficiency. The representation reduces the total error whenever the reduction in variance exceeds the squared bias due to coarsening. In the following section, we empirically examine how the tradeoff impacts performance of our method as we increase the extent of sufficiency violation.

\subsection{Empirical analysis of sufficiency violation}\label{app:suff-violation-empirical}
We complement Theorem~\ref{thm:phi_cate_id} with an empirical sweep over the degree of sufficiency-(ii) violation. Building on the synthetic DGP of Section~\ref{sec:exp_synth}, we introduce a scalar $\alpha\in[0,1]$ that controls how much of the treatment-induced shift in $S$ is driven by features of $X$ \emph{not} predictive of $S$ at baseline ($T=0$):
\[
  S_i(t) = S_i(0) + t\cdot\left\{\gamma_0 + \sqrt{1-\alpha^2}\,g_s(Z_{s,i}) + \alpha\,g_y(Z_{y,i})\right\},
\]
The baseline surrogate $S_i(0)$ depends on $Z_s$ but not on $Z_y$. Thus, the historical control data contain information about $Z_s$, but provide no signal for the representation to retain $Z_y$. Thus, $g_y(Z_y)$ introduces treatment-effect heterogeneity that cannot be learned from the untreated historical sample. We scale $g_s(Z_s)$ and $g_y(Z_y)$ to have equal variance, so that changing $\alpha$ changes the source of the heterogeneity without changing its overall magnitude. At $\alpha=0$, Assumption~\ref{assump:sufficiency}(ii) holds and we identify $\tau(x)$. As $\alpha$ increases, more heterogeneity depends on features that $\phi$ cannot recover from the historical data, and Theorem~\ref{thm:phi_cate_id} instead identifies $\tau_\phi(x)$. At $\alpha=1$, all treatment-effect heterogeneity is determined by these features.

We use the same three-fold DML learner as in Section~\ref{sec:experiments}. We run each method using both the historical outcome-model prediction $\widehat Y$ and the true experimental outcome $Y$ as the CATE target. We vary $\alpha\in\{0,0.25,0.5,0.75,1\}$ and $n\in\{50,100,250,500,750\}$ and report mean $\pm$ one standard error over the same ten paired historical and experimental samples used in Section~\ref{sec:exp_synth}. Remaining implementation details in Appendix~\ref{app:synth_exp_extra}.

\begin{figure}[t]
  \centering
  \begin{subfigure}[t]{\textwidth}\centering
    \includegraphics[width=\textwidth]{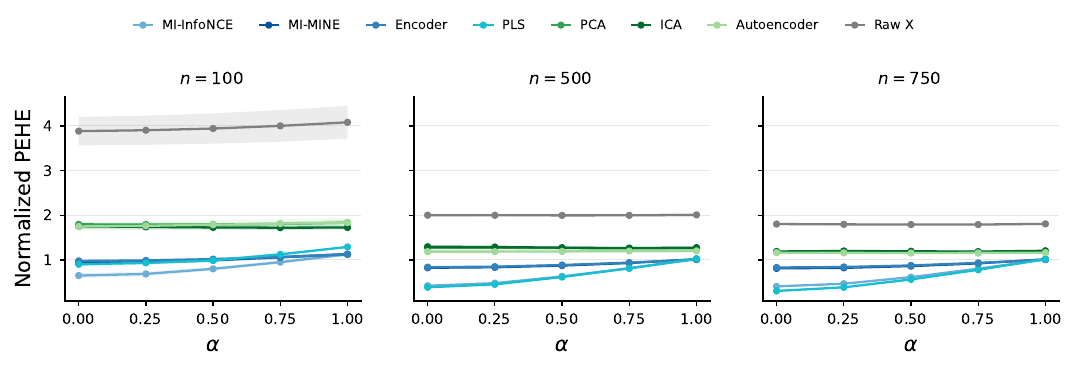}
    \caption{Normalized PEHE, $\hat Y$ target.}
  \end{subfigure}\\[2pt]
  \begin{subfigure}[t]{\textwidth}\centering
    \includegraphics[width=\textwidth]{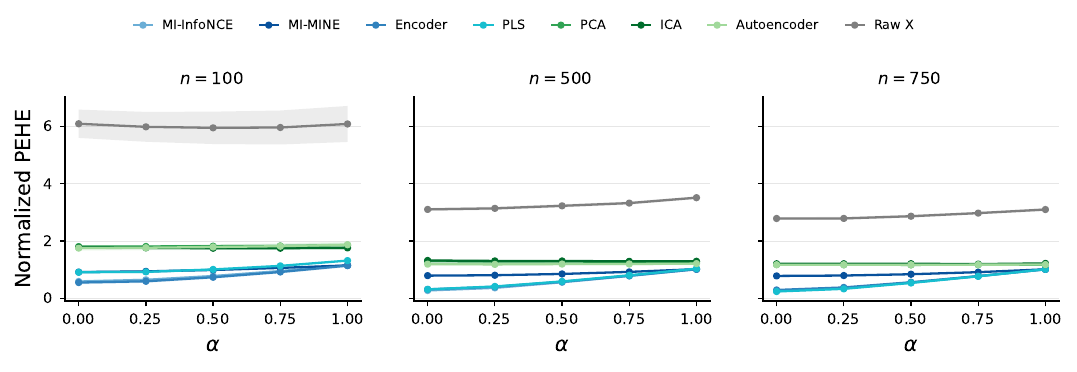}
    \caption{Normalized PEHE, $Y$ target.}
  \end{subfigure}
  \caption{\textbf{Sufficiency-(ii) violation, DML --- normalized PEHE.}
True-outcome-aware representations retain a clear advantage under partial violations, but their normalized PEHE increases with $\alpha$ as more treatment-effect heterogeneity depends on features that cannot be learned from the historical data.}
  \label{fig:alpha-sweep-dml-pehe}
\end{figure}

\begin{figure}[t]
  \centering
  \begin{subfigure}[t]{\textwidth}\centering
    \includegraphics[width=\textwidth]{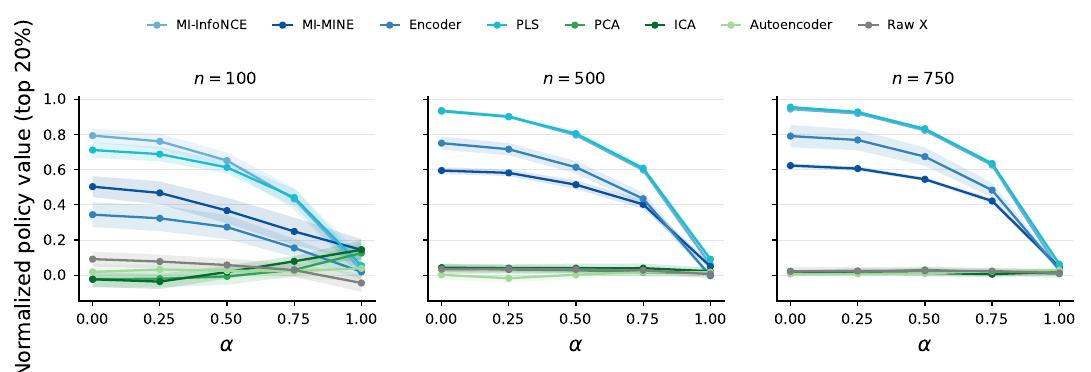}
    \caption{Normalized policy value (top 20\%), $\hat Y$ target.}
  \end{subfigure}\\[2pt]
  \begin{subfigure}[t]{\textwidth}\centering
    \includegraphics[width=\textwidth]{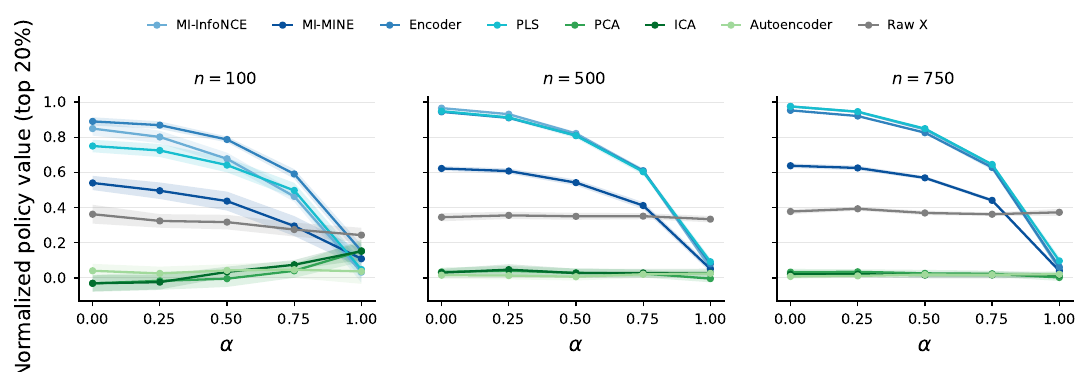}
    \caption{Normalized policy value (top 20\%), $Y$ target.}
  \end{subfigure}
\caption{\textbf{Sufficiency-(ii) violation, DML --- normalized
    top-20\% policy value.} The policy advantage declines gradually under partial violation and disappears at $\alpha=1$, when the representation no longer contains the information that determines treatment-effect rankings.}
  \label{fig:alpha-sweep-dml-policy}
\end{figure}

\paragraph{Results.}
Figures~\ref{fig:alpha-sweep-dml-pehe} and~\ref{fig:alpha-sweep-dml-policy} report normalized PEHE and policy value across $\alpha$, respectively.
At $n=500$, increasing $\alpha$ from zero to $0.5$ raises MI-InfoNCE's normalized PEHE from $0.423$ to $0.549$, while its policy value remains high ($0.936$ to $0.862$). At $\alpha=1$, however, its PEHE reaches $1.031$ and its policy value falls to $0.041$; Other supervised methods follow the same pattern, and the trend is similar across sample sizes. We observe that compression can still reduce squared error relative to raw $X$, whose normalized PEHE is approximately $2.0$ in this setting, but it cannot recover heterogeneity absent from the representation, as seen at $\alpha = 1.0$. The results support our method's robustness to significant violations of sufficiency, although not to complete violations.

\section{Surrogacy Violation}\label{app:surrogacy-violation}

Appendix~\ref{app:suff-violation-empirical} characterizes what happens to the estimator when sufficiency-(ii) is violated. Here we treat the second identification assumption that may not hold exactly in practice: \emph{surrogacy} (Assumption~\ref{assump:surrogacy}), which requires $T\!\perp\!Y^*\mid S, X, P=E$ -- i.e., that the surrogates
$S$ fully mediate the effect of treatment on the target outcome. When this fails, treatment exerts an additional direct effect on $Y^*$ that bypasses $S$.

Figure~\ref{fig:dags} illustrates both violations within the broader graphical model assumed by our framework. The top row shows the unviolated case: the historical DAG has no treatment node, the experimental DAG introduces $T$ acting on $S$ (with $S$ in turn driving $Y^*$), and the latent $Z$ affects both $S$ and $Y^*$ throughout. The bottom row depicts the two violations we are concerned with. Surrogacy violation (left) adds a direct $T\!\to\!Y^*$ edge, breaking the mediation through $S$. Sufficiency-(ii) violation (right) is treatment-arm-specific: at $T=0$ the structure matches what $\phi$ was trained on, while at $T=1$ a violating $X\!\to\!S$ edge appears, encoding heterogeneity in the treatment effect on $S$ that $\phi$ cannot recover from historical (untreated) data. The two violations affect CATE estimation in different ways: sufficiency-(ii) coarsens the estimand to $\tau_\phi(x)$ (Theorem~\ref{thm:phi_cate_id}), while surrogacy violation introduces bias unless $Y^*$ is observed in the experimental sample, in which case randomization identifies the representation-conditional CATE directly.

In the following section, we run an empirical sweep over a scalar $\delta$ controlling the strength of the direct $T\!\to\!Y^*$ path, and study the robustness of our method to such a violation.

\begin{figure}
    \centering  
    \includegraphics[width=1\linewidth]{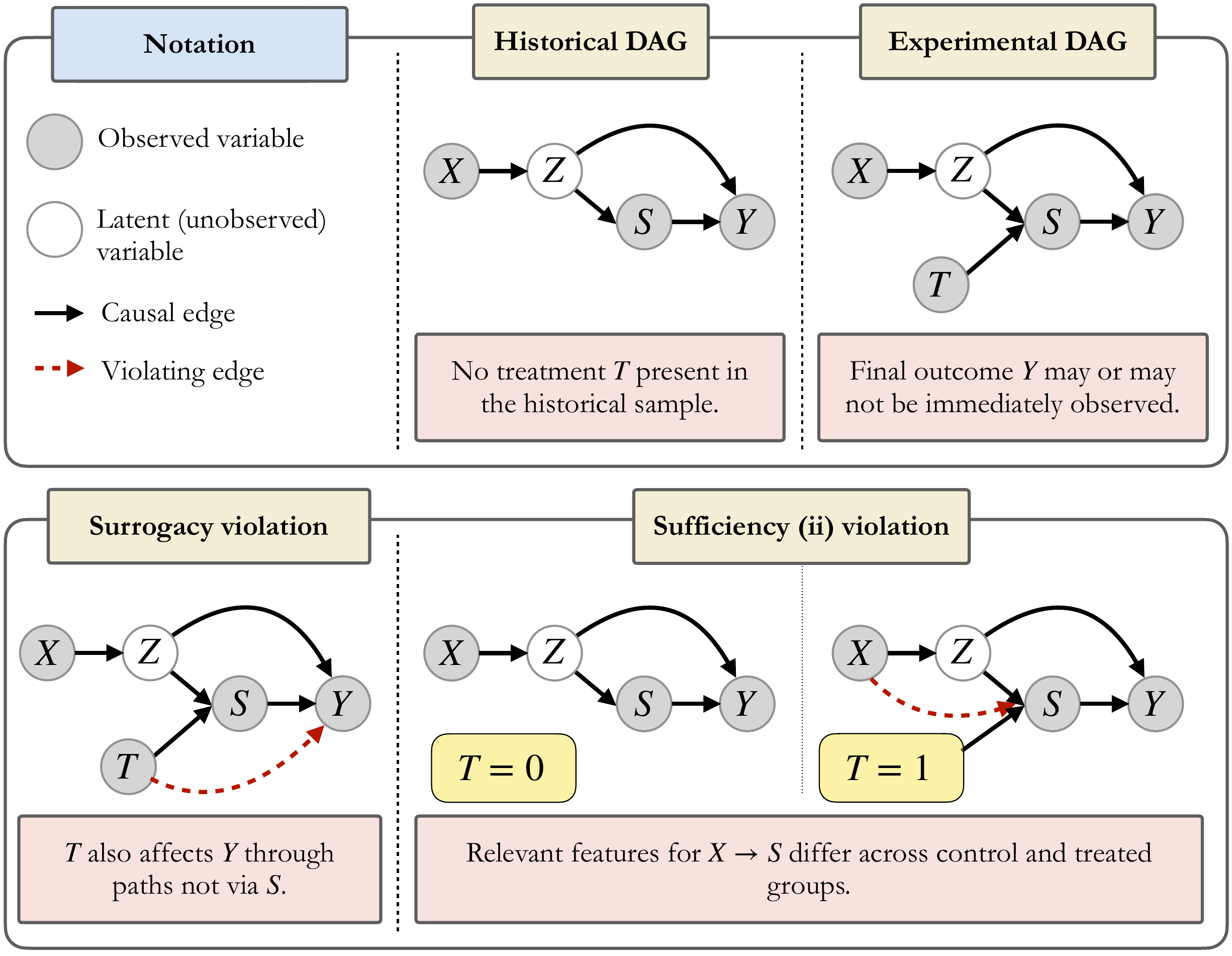}
    \caption{\textbf{Causal structure: default and two identification violations.} \emph{Top row, left to right:} notation; the historical DAG, where no treatment is present; the  experimental DAG, where $T$ acts on $S$ and $S$ mediates the effect on $Y^*$. $Z$ is the ground-truth latent representation of $X$ on which $S$ and $Y^*$ depend; $\phi(X)$ is trained to recover $Z$. \emph{Bottom row:} the two violations. \emph{Left, surrogacy violation:} a direct $T\!\to\!Y^*$ edge (dashed red) breaks the mediation through $S$; analyzed in Appendix~\ref{app:delta-violation-empirical}.
  \emph{Right, sufficiency-(ii) violation:} the violation is specific to the treated group. At $T=0$ (left sub-panel) the structure matches
  what $\phi$ was trained on, so sufficiency-(ii) holds; at $T=1$ (right
  sub-panel) a violating $X\!\to\!S$ edge (dashed red) appears, encoding
  heterogeneity in the treatment effect on $S$ that depends on features
  of $X$ unrecoverable from untreated data. Analyzed in
  Appendix~\ref{app:suff-violation-empirical}.}
\label{fig:dags}
\end{figure}

\subsection{Empirical analysis of surrogacy violation (\texorpdfstring{$\delta$}{delta})}\label{app:delta-violation-empirical}

We complement the above discussion with an empirical sweep over the degree of \emph{surrogacy violation}, i.e.\ the extent to which $Y^*$ depends on treatment $T$ through paths that do not pass through the surrogate $S$. Building on the synthetic DGP of Section~\ref{sec:exp_synth}, we introduce a scalar $\delta\ge 0$ that controls the strength of the direct effect of treatment on $Y^*$ that does not pass through $S$:
\[
  Y^*_i(t) = h(S_i(t)) + \delta \cdot t \cdot g_\perp(Z^Y_i) + \varepsilon_i,
\]
where $Z^Y$ denotes latent directions that do not affect $S$, and $g_\perp$ is fixed across $\delta$. Since the direct term is multiplied by treatment, it is absent from the untreated historical data and cannot be learned by the historical outcome model. At $\delta=0$, surrogacy holds and the treatment effect on $Y^*$ is fully mediated by $S$. As $\delta$ increases, a larger share of the treatment effect bypasses $S$. We scale the direct and mediated components to have equal variance at $\delta=1$; at $\delta=2$, the direct component accounts for 80\% of the CATE variance.

We use the same three-fold DML learner as in Section~\ref{sec:experiments}. We run each method using both the historical outcome-model prediction $\widehat Y$ and the true experimental outcome $Y$ as the CATE target. We vary $\delta\in\{0,0.5,1,2\}$ and $n\in\{50,100,250,500,750\}$ and report mean $\pm$ one standard error over the same ten historical and experimental samples used in Section~\ref{sec:exp_synth}. Remaining implementation details are in Appendix~\ref{app:synth_exp_extra}.

\begin{figure}[t]
  \centering
  \begin{subfigure}[t]{\textwidth}\centering
    \includegraphics[width=\textwidth]{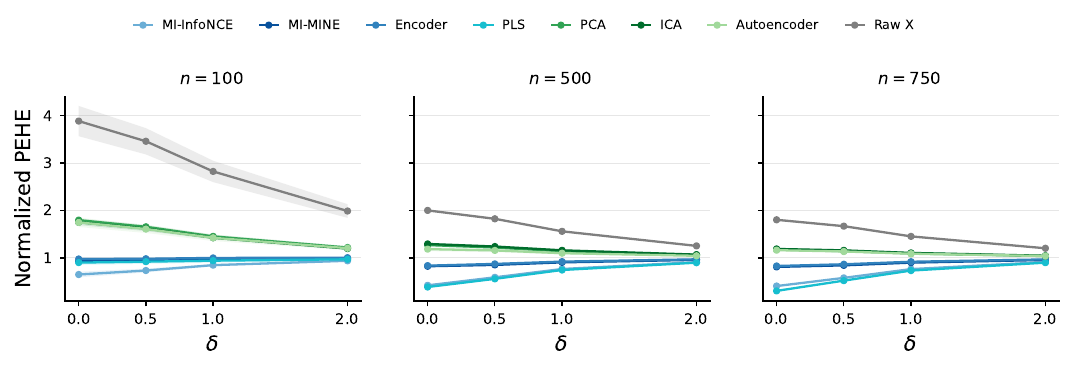}
    \caption{Normalized PEHE, $\hat Y$ target.}
  \end{subfigure}\\[2pt]
  \begin{subfigure}[t]{\textwidth}\centering
    \includegraphics[width=\textwidth]{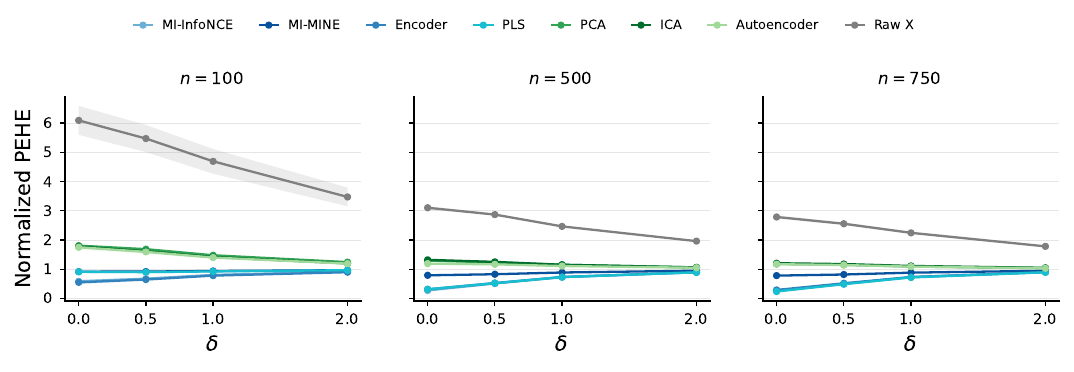}
    \caption{Normalized PEHE, $Y$ target.}
  \end{subfigure}
  \caption{\textbf{Surrogacy violation, DML -- normalized PEHE.}
    The outcome-aware representations lose accuracy as more of the treatment effect bypasses $S$, but remain more accurate than unsupervised compression and raw $X$ throughout the sweep.}
  \label{fig:delta-sweep-dml-pehe}
\end{figure}

\begin{figure}[t]
  \centering
  \begin{subfigure}[t]{\textwidth}\centering
    \includegraphics[width=\textwidth]{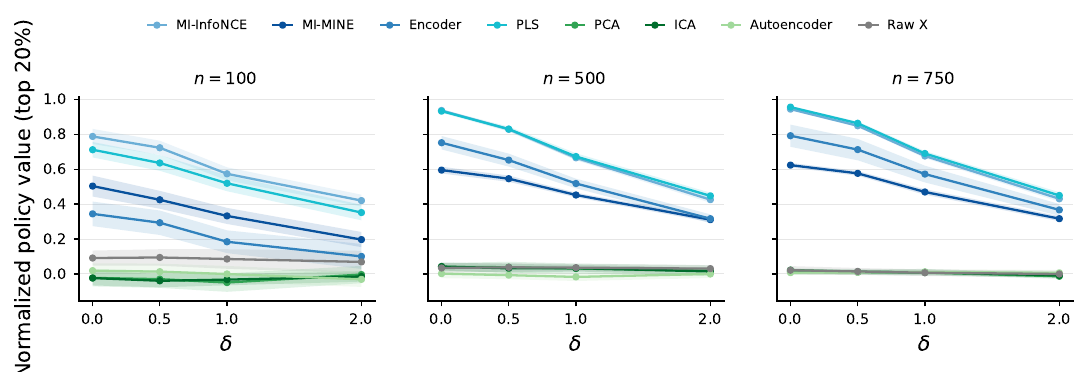}
    \caption{Normalized policy value (top 20\%), $\hat Y$ target.}
  \end{subfigure}\\[2pt]
  \begin{subfigure}[t]{\textwidth}\centering
    \includegraphics[width=\textwidth]{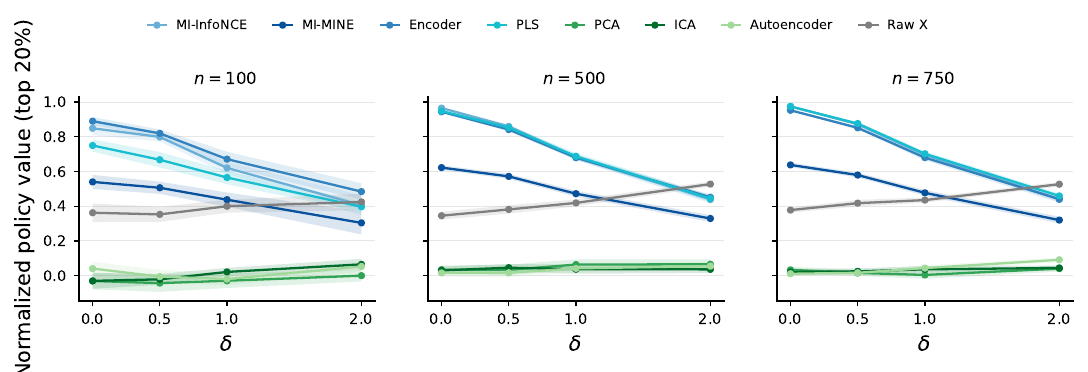}
    \caption{Normalized policy value (top 20\%), $Y$ target.}
  \end{subfigure}
  \caption{\textbf{Surrogacy violation, DML -- normalized top-20\%
    policy value.} The policy advantage declines as the direct effect grows, but remains positive at $\delta=2$ because the surrogate-mediated component remains informative.}
  \label{fig:delta-sweep-dml-policy}
\end{figure}

\paragraph{Results.}
Figures~\ref{fig:delta-sweep-dml-pehe} and~\ref{fig:delta-sweep-dml-policy} report normalized PEHE and policy value across $\delta$, respectively.
At $n=500$, increasing $\delta$ from zero to one raises MI-InfoNCE's normalized PEHE from $0.423$ to $0.767$, while its policy value goes from $0.936$ to $0.665$. At $\delta=2$, its PEHE reaches $0.911$ and its policy value falls to $0.425$; PLS follows the same pattern, and the trend is similar across sample sizes. When we use the true experimental outcome, raw $X$ increasingly recovers the direct-effect ranking, although its normalized PEHE at $\delta=2$ remains substantially higher than that of MI-InfoNCE ($1.961$ versus $0.898$). For raw $X$ and unsupervised methods, normalized PEHE declines with $\delta$ even though absolute PEHE increases because the standard deviation of the CATE also increases. Thus historical supervision remains useful under substantial surrogacy violations, but its advantage narrows as more of the treatment effect bypasses $S$.

\section{Prediction losses as mutual information lower bounds}\label{app:mi_bounds}
In addition to neural MI estimators (InfoNCE, MINE), we also report results using simple prediction losses (BCE for binary outcomes, MSE for continuous). Here we show that both BCE and MSE losses lower-bound the mutual information $I(X; Y)$, providing principled justification for their use.

\paragraph{Setup.} Since $I(X; Y) = H(Y) - H(Y \mid X)$ and $H(Y)$ is fixed, maximizing MI is equivalent to minimizing $H(Y \mid X)$. We show each loss upper-bounds $H(Y \mid X)$, so minimizing the loss tightens a lower bound on MI. Conditioning on additional covariates $Z$ follows identically.

\paragraph{BCE ($Y \mid X$ Bernoulli).} The expected BCE loss decomposes via $H(P, Q) = H(P) + D_{\mathrm{KL}}(P \parallel Q)$:
\[
\mathcal{L}_{\mathrm{BCE}}(\theta) = \mathbb{E}_X[H(P_{\mathrm{true}}(Y \mid X), P_\theta(Y \mid X))] = H(Y \mid X) + \mathbb{E}_X[D_{\mathrm{KL}}(P_{\mathrm{true}} \parallel P_\theta)].
\]
Since $D_{\mathrm{KL}} \geq 0$, $\mathcal{L}_{\mathrm{BCE}} \geq H(Y \mid X)$, hence $H(Y) - \mathcal{L}_{\mathrm{BCE}} \leq I(X; Y)$. Minimizing $\mathcal{L}_{\mathrm{BCE}}$ drives $D_{\mathrm{KL}} \to 0$ (given sufficient capacity), making the bound tight.

\paragraph{MSE ($Y \mid X$ real-valued, $\mu(X) = \mathbb{E}[Y \mid X]$).} The bias-variance decomposition gives:
\[
\mathcal{L}_{\mathrm{MSE}}(\theta) = \mathbb{E}_X[\mathrm{Var}(Y \mid X)] + \mathbb{E}_X[(\mu(X) - f_\theta(X))^2] \geq \mathbb{E}_X[\mathrm{Var}(Y \mid X)].
\]
To connect to MI, we bound the conditional differential entropy $h(Y \mid X)$. The Gaussian distribution has maximum differential entropy among distributions with a given variance \citep{shannon1948mathematical}, so pointwise: $h(Y \mid X = x) \leq \tfrac{1}{2}\ln(2\pi e \cdot \mathrm{Var}(Y \mid X = x))$. Taking $\mathbb{E}_X$ and applying Jensen's inequality:
\[
h(Y \mid X) \leq \tfrac{1}{2}\ln(2\pi e \cdot \mathbb{E}_X[\mathrm{Var}(Y \mid X)]) \leq \tfrac{1}{2}\ln(2\pi e \cdot \mathcal{L}_{\mathrm{MSE}}(\theta)).
\]
Hence $h(Y) - \tfrac{1}{2}\ln(2\pi e \cdot \mathcal{L}_{\mathrm{MSE}}) \leq I(X; Y)$, and minimizing $\mathcal{L}_{\mathrm{MSE}}$ tightens this bound.

\paragraph{Tightness.} The BCE bound is tight at convergence (with sufficient capacity). The MSE bound involves two additional gaps: the max-entropy gap (tight when $Y \mid X$ is Gaussian) and Jensen's gap (tight when $\mathrm{Var}(Y \mid X)$ is constant in $X$). Both close under additive Gaussian noise $Y = \mu(X) + \varepsilon$, $\varepsilon \sim \mathcal{N}(0, \sigma^2)$. In general the MSE bound is valid but looser, which is consistent with our empirical finding (Section~\ref{sec:experiments}) that MINE and InfoNCE achieve modestly better performance than the prediction-loss variant.

\paragraph{Application to the combined objective.} The bounds above apply to each term in Equation~\ref{eq:objective_function}: setting $(X, Y \mid Z) = (\phi(X), Y^* \mid S)$ yields the outcome-relevant MI bound, and $(X, Y) = (\phi(X), S)$ yields the surrogate-relevant bound.

\section{Empirical results (cont.)}\label{app:empirical_extra}
\paragraph{Libraries used.}
All experiment code is written in Python. We implement the neural-network encoder architecture using PyTorch~\citep{paszke2019pytorch}. For baseline machine-learning models, preprocessing, and evaluation metrics we use scikit-learn~\citep{scikit-learn}; the X-learner is implemented via CausalML~\citep{chen2020causalml}, and the DML estimator via EconML~\citep{econml}. Synthetic outcome and propensity nuisance models use scikit-learn's \texttt{HistGradientBoostingRegressor}; dataset-specific settings are given below.

\paragraph{Our method: encoder backbone.}
For the synthetic experiments, all neural methods use the same feedforward encoder with ReLU activations, $X \in \mathbb{R}^{d} \to 128 \to 64 \to 32 \to m$, where $m=10$ is the bottleneck dimension. We train with Adam (lr $10^{-3}$, batch~$256$, up to $200$ epochs) and early stopping on training loss with patience~$15$ and stopping criterion of $10^{-4}$. The semi-synthetic experiment uses the smaller backbone and validation-based early stopping described in Appendix~\ref{app:semi_synthetic}, with $m=10$ in the main analysis and $m=5$ as a sensitivity analysis.

\paragraph{Our method: training variants.}
We instantiate the objective in Equation~\ref{eq:objective_function} in three ways, sharing the encoder backbone but using different heads.
\begin{itemize}
\item \texttt{encoder\_pred}: prediction-based (MSE bound on MI). A surrogate head $\phi(X) \to \hat S$ and an outcome head $(\phi(X), S) \to \hat Y$, both with one $64$-wide hidden layer and ReLU. Loss $\mathcal{L} = \mathrm{MSE}(\hat Y, Y) + \lambda_S \cdot \mathrm{MSE}(\hat S, S)$.
\item \texttt{mi\_infonce\_cond}: InfoNCE on the joint $(Y, S)$ anchor. Projection heads $g_\phi: \phi(X) \to \mathbb{R}^{p}$ and $g_{YS}: (Y, S) \to \mathbb{R}^{p}$ feed a symmetric in-batch contrastive loss.
\item \texttt{mi\_mine\_cond}: Donsker--Varadhan estimator on the joint
  $(Y, S)$ anchor: a critic network $T_{(Y,S)}(\phi(X), (Y, S)) \to \mathbb{R}$ scores joint vs.\ shuffled pairs.
\end{itemize}
Both MI variants additionally include a marginal $S$-anchor term weighted by $(\lambda_S - 1)$, recovering the chain-rule decomposition of Section~\ref{sec:objective_function}. We use $\lambda_S = 2$ throughout the reported experiments.

\subsection{Synthetic dataset: details}\label{app:synth_exp_extra}
\subsubsection{Description}
\paragraph{Data-generating process (full).}
We construct the ten-dimensional latent variable $Z=X\,W_{XZ}^{\!\top}$ from linear combinations of a small subset of the covariates. We split $Z=(Z_s,Z_y)$, where $Z_s,Z_y\in\mathbb{R}^5$ depend on two disjoint sets of ten covariates. We draw the coefficients defining these relationships once and hold them fixed across all data samples. In the main synthetic experiment in Section~\ref{sec:exp_synth}, we set $\alpha=\delta=0$, so only $Z_s$ affects the surrogates, outcome, and treatment effect. We use $Z_y$ only in the sufficiency- and surrogacy-violation experiments.

The surrogate $S \in \mathbb{R}^{10}$ is a noisy linear function of $Z_s$, shifted by a $Z_s$-driven treatment effect:
\[
S_i(t) = \beta_0 + Z_{s,i}\beta_1^{\!\top} + t \cdot \tau_S(Z_{s,i}) + \epsilon_{S,i}, \qquad \epsilon_{S,i} \sim \mathcal{N}(\mathbf{0}, \sigma_S^2 \mathbf{I}),\]
with $\tau_S(z) = \gamma_0 + z\gamma_z^{\!\top}$, $\beta_0 \sim \mathcal{N}(0, 0.5^2)$, $\beta_1 \sim \mathcal{N}(0, 1)$, $\gamma_0 \sim \mathcal{N}(0, 0.5^2)$, $\gamma_z \sim \mathcal{N}(0, 0.3^2)$, and $\sigma_S = 0.1$. The primary outcome depends on $X$ only through $S$:
$$Y_i = S_i w_{SY} + \epsilon_{Y,i}, \qquad \epsilon_{Y,i} \sim \mathcal{N}(0, \sigma_Y^2),$$
with $w_{SY} \sim \mathcal{N}(\mathbf{0}, \mathbf{I})$ and $\sigma_Y = 0.1$.
Both the baseline value of $S$ and the treatment effect on $S$ depend only on $Z_s$ in the main experiment. For the sufficiency-violation experiment, we also allow $Z_y$ to affect the treatment-induced change in $S$. We scale the $Z_s$ and $Z_y$ components so that they contribute equal variance to the scalar CATE; the weights $\sqrt{1-\alpha^2}$ and $\alpha$ therefore keep its variance fixed as $\alpha$ varies. For the surrogacy-violation experiment, we add a direct treatment effect on $Y$ that depends on $Z_y$ and scale it to contribute the same CATE variance as the mediated component at $\delta=1$. The $w_{SY}$ weights mapping $S \to Y$ are $\mathcal{N}(\mathbf{0}, \mathbf{I})$; observation noise is $\sigma_S = \sigma_Y = 0.1$ on both surrogate and outcome.

\paragraph{Methods, training, and evaluation.}
We compare our three representations (\texttt{encoder\_pred} -- MSE prediction heads on $S$ and $Y$, \texttt{mi\_mine\_cond} -- MINE on the joint $(Y,S)$ anchor, and \texttt{mi\_infonce\_cond} -- InfoNCE on the joint $(Y,S)$ anchor) against \texttt{pls} (supervised linear), \texttt{autoencoder}, \texttt{pca}, and \texttt{ica} (unsupervised), and two raw-feature comparisons with $\phi(X)\equiv X$, whose outcome models fit either of $h(X)$ or $h(X,S)$. Neural encoders use the MLP backbone $1000 \to 128 \to 64 \to 32 \to \phi$ with ReLU activations, trained with Adam (lr $10^{-3}$, batch 256, up to $200$ epochs, early stopping on training loss with patience $15$ and min-delta $10^{-4}$); the surrogate-prediction weight is $\lambda_S = 2$, and the InfoNCE temperature is $0.07$. Every reported synthetic experiment uses a 50-tree random forest for the outcome prediction model trained on historical data and \texttt{HistGradientBoostingRegressor} with $40$ iterations inside the CATE learners. We report both X-learner and DML using either the predicted outcome $\hat Y$ or the observed outcome $Y$ as the CATE target.

\paragraph{Metrics, seeds, and compute.}
We report normalized PEHE $\sqrt{\mathbb{E}[(\hat{\tau} - \tau)^2]} / \mathrm{std}(\tau)$ (scale-invariant across DGPs) and normalized top-$k$ policy value $(V_{\pi} - V_{\mathrm{random}}) / (V_{\mathrm{oracle}} - V_{\mathrm{random}})$, where $V_{\pi}$ is the mean of $\tau_{\mathrm{true}}$ over the top-$k$ units ranked by $\hat{\tau}$, $V_{\mathrm{random}}$ is the population mean, and $V_{\mathrm{oracle}}$ is $V_{\pi}$ with $\hat{\tau}$ replaced by
$\tau_{\mathrm{true}}$. By construction $0$ corresponds to a random ranker and $1$ to the oracle policy. All synthetic experiments fix one DGP ($\texttt{dgp\_seed}=42$) and average over the same 10 historical and experimental resamples. The main synthetic and violation experiments use $n\in\{50,100,250,500,750\}$; error bars show $\pm 1$ standard error.

\subsubsection{Additional results: figures}\label{app:synth-figures}
\Cref{fig:synth_appendix_xl} reports the X-learner results across both CATE targets ($\hat Y$ and $Y$); \Cref{fig:synth_appendix_dml} reports the corresponding DML results. Every panel uses the same DGP, sample-size grid, methods, and 10 historical and experimental resamples as the main analysis.

\begin{figure}[h]
    \centering
    \begin{subfigure}[t]{0.48\linewidth}
        \centering
        \includegraphics[width=\linewidth]{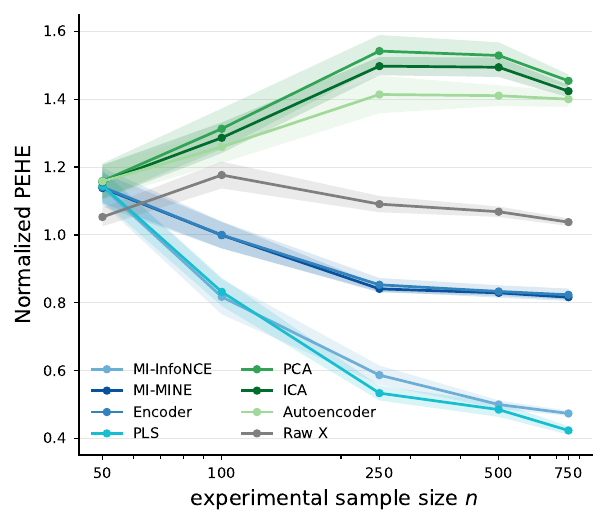}
        \caption{PEHE, $\hat{Y}$ target.}
        \label{fig:synth_appendix_xl_pehe_pred}
    \end{subfigure}
    \hfill
    \begin{subfigure}[t]{0.48\linewidth}
        \centering
        \includegraphics[width=\linewidth]{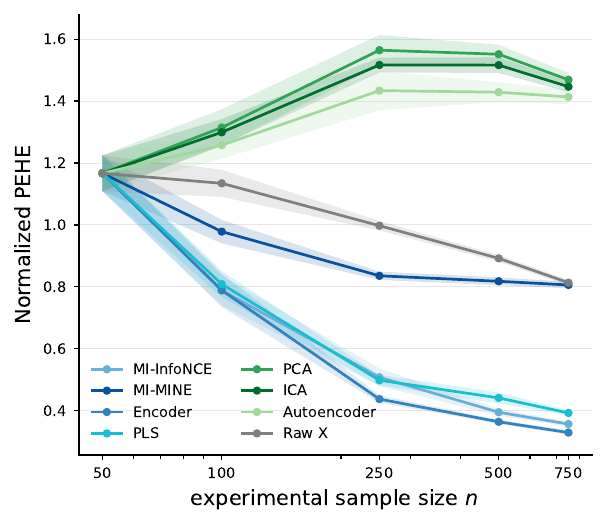}
        \caption{PEHE, $Y$ target.}
        \label{fig:synth_appendix_xl_pehe_true}
    \end{subfigure}

    \vspace{0.4em}

    \begin{subfigure}[t]{0.48\linewidth}
        \centering
        \includegraphics[width=\linewidth]{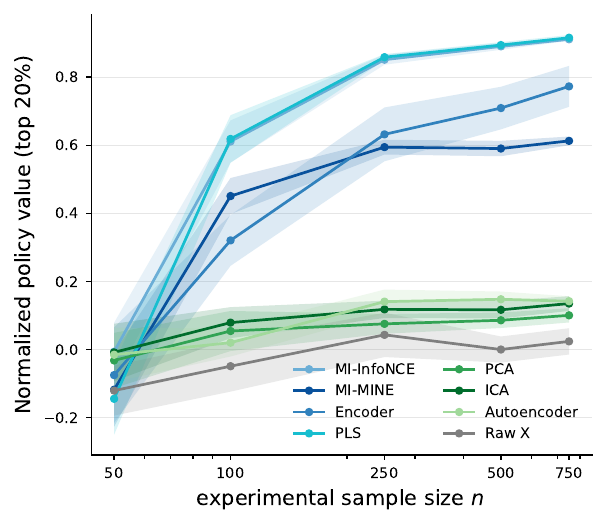}
        \caption{Policy@20\%, $\hat{Y}$ target.}
        \label{fig:synth_appendix_xl_policy_pred}
    \end{subfigure}
    \hfill
    \begin{subfigure}[t]{0.48\linewidth}
        \centering
        \includegraphics[width=\linewidth]{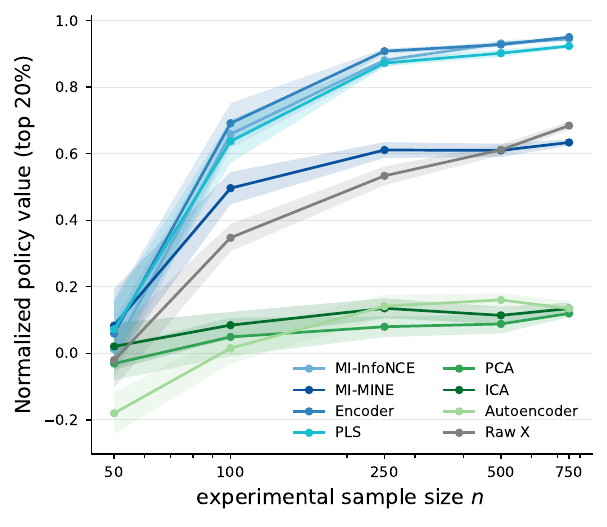}
        \caption{Policy@20\%, $Y$ target.}
        \label{fig:synth_appendix_xl_policy_true}
    \end{subfigure}

    \caption{\textbf{Synthetic data} results using the X-learner. Rows: error metric (PEHE; normalized top-20\% policy value). Columns: CATE target ($\hat{Y}$ vs.\ $Y$). Curves show means over 10 historical and experimental resamples from the same DGP; shaded regions show $\pm 1$ standard error.}
    \label{fig:synth_appendix_xl}
\end{figure}

\begin{figure}[h]
    \centering
    \begin{subfigure}[t]{0.48\linewidth}
        \centering
        \includegraphics[width=\linewidth]{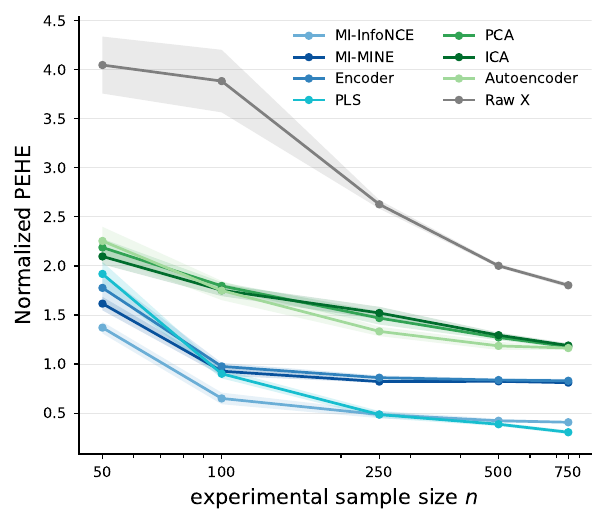}
        \caption{PEHE, $\hat{Y}$ target.}
        \label{fig:synth_appendix_dml_pehe_pred}
    \end{subfigure}
    \hfill
    \begin{subfigure}[t]{0.48\linewidth}
        \centering
        \includegraphics[width=\linewidth]{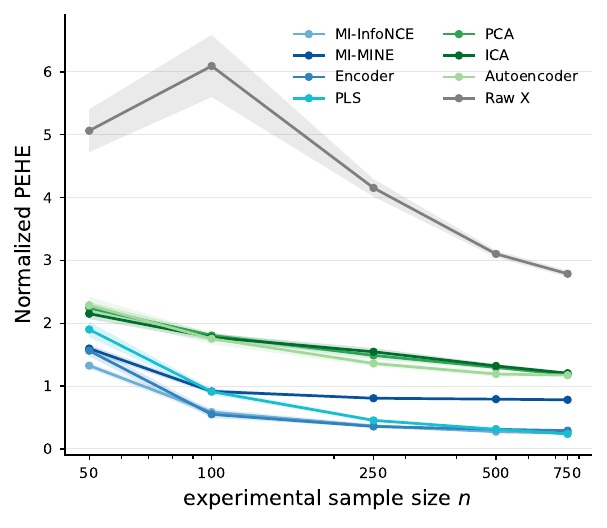}
        \caption{PEHE, $Y$ target.}
        \label{fig:synth_appendix_dml_pehe_true}
    \end{subfigure}

    \vspace{0.4em}

    \begin{subfigure}[t]{0.48\linewidth}
        \centering
        \includegraphics[width=\linewidth]{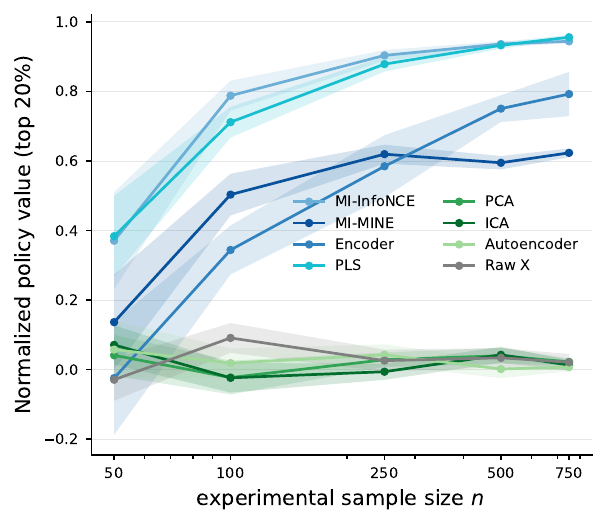}
        \caption{Policy@20\%, $\hat{Y}$ target.}
        \label{fig:synth_appendix_dml_policy_pred}
    \end{subfigure}
    \hfill
    \begin{subfigure}[t]{0.48\linewidth}
        \centering
        \includegraphics[width=\linewidth]{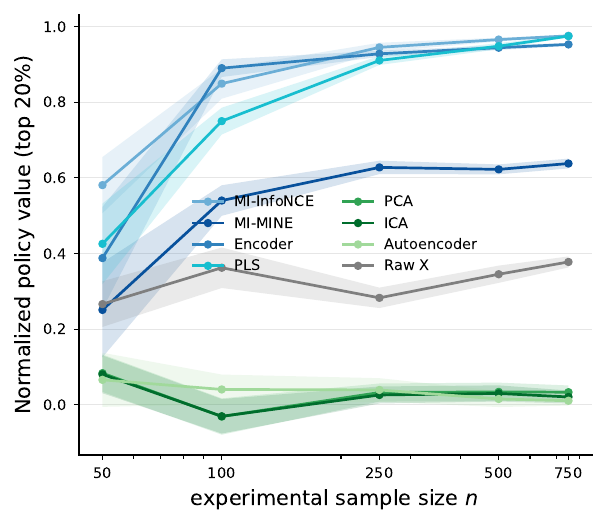}
        \caption{Policy@20\%, $Y$ target.}
        \label{fig:synth_appendix_dml_policy_true}
    \end{subfigure}

    \caption{\textbf{Synthetic data} results using the DML learner using predicted and true outcomes as CATE targets. Curves show means over the same 10 historical and experimental resamples as the main analysis; shaded regions show $\pm 1$ standard error.}
    \label{fig:synth_appendix_dml}
\end{figure}

\subsubsection{Additional results: tabulated PEHE and top-20\% policy value}\label{app:synth-tables}

Tables~\ref{tab:synth_pehe_norm_xl}--\ref{tab:synth_policy_norm_20_dml} report X-learner and DML results for the nine methods in the main comparison. For the true-$Y$ target, the two raw-$X$ variants coincide because no historical outcome model is used. All entries use the same 10 paired historical and experimental resamples.

\begin{table}[t]
\centering
\resizebox{\linewidth}{!}{%
\begin{tabular}{l|ccccc|ccccc}
\toprule
 & \multicolumn{5}{c}{$\widehat Y$ target} & \multicolumn{5}{c}{$Y$ target} \\
Method & $n=50$ & $n=100$ & $n=250$ & $n=500$ & $n=750$ & $n=50$ & $n=100$ & $n=250$ & $n=500$ & $n=750$ \\
\midrule
MI-InfoNCE & 1.141$\pm$0.048 & \textbf{0.818$\pm$0.052} & 0.587$\pm$0.028 & 0.500$\pm$0.012 & 0.474$\pm$0.008 & \textbf{1.167$\pm$0.058} & \textbf{0.789$\pm$0.048} & 0.508$\pm$0.028 & 0.395$\pm$0.012 & 0.357$\pm$0.008 \\
MI-MINE & 1.138$\pm$0.044 & 0.999$\pm$0.039 & 0.841$\pm$0.010 & 0.829$\pm$0.009 & 0.816$\pm$0.007 & \textbf{1.167$\pm$0.058} & 0.978$\pm$0.038 & 0.836$\pm$0.013 & 0.818$\pm$0.013 & 0.806$\pm$0.010 \\
Encoder & 1.141$\pm$0.057 & 1.000$\pm$0.040 & 0.853$\pm$0.020 & 0.833$\pm$0.018 & 0.824$\pm$0.018 & \textbf{1.167$\pm$0.058} & 0.789$\pm$0.054 & \textbf{0.437$\pm$0.012} & \textbf{0.364$\pm$0.006} & \textbf{0.329$\pm$0.007} \\
PLS & 1.145$\pm$0.046 & 0.832$\pm$0.041 & \textbf{0.534$\pm$0.022} & \textbf{0.485$\pm$0.021} & \textbf{0.423$\pm$0.012} & \textbf{1.167$\pm$0.058} & 0.809$\pm$0.042 & 0.498$\pm$0.017 & 0.441$\pm$0.018 & 0.393$\pm$0.008 \\
PCA & 1.159$\pm$0.052 & 1.313$\pm$0.060 & 1.542$\pm$0.047 & 1.529$\pm$0.039 & 1.454$\pm$0.019 & \textbf{1.167$\pm$0.058} & 1.314$\pm$0.059 & 1.564$\pm$0.049 & 1.551$\pm$0.031 & 1.469$\pm$0.022 \\
ICA & 1.156$\pm$0.050 & 1.286$\pm$0.045 & 1.498$\pm$0.027 & 1.494$\pm$0.029 & 1.424$\pm$0.018 & \textbf{1.167$\pm$0.058} & 1.299$\pm$0.043 & 1.517$\pm$0.024 & 1.516$\pm$0.025 & 1.447$\pm$0.018 \\
Autoencoder & 1.158$\pm$0.051 & 1.260$\pm$0.047 & 1.414$\pm$0.055 & 1.411$\pm$0.030 & 1.400$\pm$0.023 & \textbf{1.167$\pm$0.058} & 1.257$\pm$0.044 & 1.434$\pm$0.063 & 1.429$\pm$0.032 & 1.413$\pm$0.023 \\
Raw $X$, $h(X)$ & \textbf{1.053$\pm$0.028} & 1.177$\pm$0.040 & 1.091$\pm$0.024 & 1.068$\pm$0.015 & 1.038$\pm$0.011 & \textbf{1.167$\pm$0.058} & 1.134$\pm$0.044 & 0.997$\pm$0.015 & 0.892$\pm$0.011 & 0.813$\pm$0.010 \\
Raw $X$, $h(X,S)$ & 1.149$\pm$0.047 & 1.109$\pm$0.041 & 0.976$\pm$0.022 & 0.861$\pm$0.010 & 0.816$\pm$0.011 & --- & --- & --- & --- & --- \\
\bottomrule
\end{tabular}}
\caption{Normalized PEHE on the synthetic data, X-learner ($\dim(\phi)=10$). Entries are means $\pm$ standard errors over 10 paired data resamples; lower is better. For the $Y$ target, the two raw-$X$ variants coincide, so we report a single value under $h(X)$.}
\label{tab:synth_pehe_norm_xl}
\end{table}

\begin{table}[t]
\centering
\resizebox{\linewidth}{!}{%
\begin{tabular}{l|ccccc|ccccc}
\toprule
 & \multicolumn{5}{c}{$\widehat Y$ target} & \multicolumn{5}{c}{$Y$ target} \\
Method & $n=50$ & $n=100$ & $n=250$ & $n=500$ & $n=750$ & $n=50$ & $n=100$ & $n=250$ & $n=500$ & $n=750$ \\
\midrule
MI-InfoNCE & \textbf{-0.006$\pm$0.085} & 0.611$\pm$0.062 & 0.851$\pm$0.015 & 0.890$\pm$0.010 & 0.911$\pm$0.005 & 0.012$\pm$0.084 & 0.659$\pm$0.042 & 0.881$\pm$0.020 & \textbf{0.931$\pm$0.010} & 0.944$\pm$0.005 \\
MI-MINE & -0.117$\pm$0.109 & 0.451$\pm$0.053 & 0.594$\pm$0.022 & 0.590$\pm$0.023 & 0.613$\pm$0.013 & \textbf{0.084$\pm$0.111} & 0.496$\pm$0.048 & 0.611$\pm$0.024 & 0.610$\pm$0.020 & 0.634$\pm$0.010 \\
Encoder & -0.075$\pm$0.140 & 0.321$\pm$0.075 & 0.632$\pm$0.079 & 0.709$\pm$0.063 & 0.773$\pm$0.060 & 0.060$\pm$0.102 & \textbf{0.691$\pm$0.060} & \textbf{0.908$\pm$0.008} & 0.928$\pm$0.005 & \textbf{0.950$\pm$0.003} \\
PLS & -0.144$\pm$0.105 & \textbf{0.618$\pm$0.070} & \textbf{0.858$\pm$0.009} & \textbf{0.894$\pm$0.009} & \textbf{0.915$\pm$0.008} & 0.072$\pm$0.084 & 0.637$\pm$0.066 & 0.872$\pm$0.011 & 0.902$\pm$0.012 & 0.923$\pm$0.006 \\
PCA & -0.032$\pm$0.081 & 0.055$\pm$0.059 & 0.076$\pm$0.029 & 0.086$\pm$0.023 & 0.101$\pm$0.021 & -0.031$\pm$0.050 & 0.049$\pm$0.057 & 0.080$\pm$0.031 & 0.088$\pm$0.029 & 0.120$\pm$0.011 \\
ICA & -0.008$\pm$0.084 & 0.079$\pm$0.045 & 0.118$\pm$0.025 & 0.117$\pm$0.025 & 0.135$\pm$0.022 & 0.021$\pm$0.070 & 0.085$\pm$0.040 & 0.135$\pm$0.030 & 0.114$\pm$0.026 & 0.135$\pm$0.018 \\
Autoencoder & -0.015$\pm$0.080 & 0.020$\pm$0.041 & 0.141$\pm$0.035 & 0.148$\pm$0.023 & 0.142$\pm$0.015 & -0.180$\pm$0.063 & 0.015$\pm$0.046 & 0.142$\pm$0.034 & 0.161$\pm$0.020 & 0.134$\pm$0.023 \\
Raw $X$, $h(X)$ & -0.120$\pm$0.073 & -0.049$\pm$0.075 & 0.043$\pm$0.065 & 0.000$\pm$0.038 & 0.024$\pm$0.039 & -0.020$\pm$0.082 & 0.348$\pm$0.041 & 0.533$\pm$0.028 & 0.611$\pm$0.016 & 0.684$\pm$0.012 \\
Raw $X$, $h(X,S)$ & -0.131$\pm$0.066 & 0.354$\pm$0.040 & 0.519$\pm$0.037 & 0.617$\pm$0.014 & 0.668$\pm$0.013 & --- & --- & --- & --- & --- \\
\bottomrule
\end{tabular}}
\caption{Normalized top-20\% policy value on the synthetic data, X-learner ($\dim(\phi)=10$). Entries are means $\pm$ standard errors over 10 paired data resamples; higher is better. For the $Y$ target, the two raw-$X$ variants coincide, so we report a single value under $h(X)$.}
\label{tab:synth_policy_norm_20_xl}
\end{table}

\begin{table}[t]
\centering
\resizebox{\linewidth}{!}{%
\begin{tabular}{l|ccccc|ccccc}
\toprule
 & \multicolumn{5}{c}{$\widehat Y$ target} & \multicolumn{5}{c}{$Y$ target} \\
Method & $n=50$ & $n=100$ & $n=250$ & $n=500$ & $n=750$ & $n=50$ & $n=100$ & $n=250$ & $n=500$ & $n=750$ \\
\midrule
MI-InfoNCE & \textbf{1.371$\pm$0.059} & \textbf{0.650$\pm$0.059} & 0.486$\pm$0.037 & 0.423$\pm$0.019 & 0.408$\pm$0.011 & \textbf{1.326$\pm$0.053} & 0.586$\pm$0.056 & 0.364$\pm$0.032 & \textbf{0.274$\pm$0.012} & 0.252$\pm$0.010 \\
MI-MINE & 1.616$\pm$0.069 & 0.928$\pm$0.046 & 0.822$\pm$0.013 & 0.825$\pm$0.008 & 0.811$\pm$0.008 & 1.598$\pm$0.059 & 0.916$\pm$0.036 & 0.806$\pm$0.008 & 0.791$\pm$0.008 & 0.781$\pm$0.007 \\
Encoder & 1.776$\pm$0.151 & 0.977$\pm$0.036 & 0.861$\pm$0.023 & 0.837$\pm$0.017 & 0.830$\pm$0.019 & 1.563$\pm$0.091 & \textbf{0.550$\pm$0.060} & \textbf{0.357$\pm$0.013} & 0.312$\pm$0.008 & 0.291$\pm$0.005 \\
PLS & 1.919$\pm$0.123 & 0.902$\pm$0.047 & \textbf{0.486$\pm$0.036} & \textbf{0.387$\pm$0.019} & \textbf{0.306$\pm$0.020} & 1.901$\pm$0.120 & 0.911$\pm$0.048 & 0.454$\pm$0.028 & 0.313$\pm$0.014 & \textbf{0.238$\pm$0.011} \\
PCA & 2.188$\pm$0.089 & 1.796$\pm$0.037 & 1.469$\pm$0.067 & 1.271$\pm$0.028 & 1.177$\pm$0.016 & 2.243$\pm$0.094 & 1.806$\pm$0.042 & 1.487$\pm$0.063 & 1.296$\pm$0.034 & 1.180$\pm$0.015 \\
ICA & 2.097$\pm$0.081 & 1.752$\pm$0.060 & 1.521$\pm$0.063 & 1.295$\pm$0.027 & 1.189$\pm$0.023 & 2.151$\pm$0.086 & 1.776$\pm$0.057 & 1.546$\pm$0.059 & 1.321$\pm$0.028 & 1.205$\pm$0.023 \\
Autoencoder & 2.254$\pm$0.144 & 1.750$\pm$0.098 & 1.333$\pm$0.048 & 1.185$\pm$0.042 & 1.163$\pm$0.020 & 2.285$\pm$0.137 & 1.753$\pm$0.089 & 1.360$\pm$0.045 & 1.191$\pm$0.044 & 1.173$\pm$0.018 \\
Raw $X$, $h(X)$ & 4.047$\pm$0.290 & 3.883$\pm$0.319 & 2.628$\pm$0.043 & 2.001$\pm$0.019 & 1.803$\pm$0.022 & 5.063$\pm$0.340 & 6.094$\pm$0.494 & 4.153$\pm$0.139 & 3.103$\pm$0.055 & 2.787$\pm$0.051 \\
Raw $X$, $h(X,S)$ & 4.876$\pm$0.329 & 5.886$\pm$0.471 & 4.076$\pm$0.125 & 3.038$\pm$0.045 & 2.720$\pm$0.046 & --- & --- & --- & --- & --- \\
\bottomrule
\end{tabular}}
\caption{Normalized PEHE on the synthetic data, DML ($\dim(\phi)=10$). Entries are means $\pm$ standard errors over 10 paired data resamples; lower is better. For the $Y$ target, the two raw-$X$ variants coincide, so we report a single value under $h(X)$.}
\label{tab:synth_pehe_norm_dml}
\end{table}

\begin{table}[t]
\centering
\resizebox{\linewidth}{!}{%
\begin{tabular}{l|ccccc|ccccc}
\toprule
 & \multicolumn{5}{c}{$\widehat Y$ target} & \multicolumn{5}{c}{$Y$ target} \\
Method & $n=50$ & $n=100$ & $n=250$ & $n=500$ & $n=750$ & $n=50$ & $n=100$ & $n=250$ & $n=500$ & $n=750$ \\
\midrule
MI-InfoNCE & 0.371$\pm$0.140 & \textbf{0.788$\pm$0.043} & \textbf{0.904$\pm$0.015} & \textbf{0.936$\pm$0.008} & 0.944$\pm$0.006 & \textbf{0.581$\pm$0.074} & 0.849$\pm$0.040 & \textbf{0.945$\pm$0.012} & \textbf{0.966$\pm$0.005} & \textbf{0.976$\pm$0.002} \\
MI-MINE & 0.136$\pm$0.136 & 0.503$\pm$0.060 & 0.620$\pm$0.027 & 0.595$\pm$0.019 & 0.623$\pm$0.013 & 0.251$\pm$0.125 & 0.540$\pm$0.040 & 0.628$\pm$0.018 & 0.622$\pm$0.013 & 0.638$\pm$0.013 \\
Encoder & -0.025$\pm$0.163 & 0.344$\pm$0.070 & 0.585$\pm$0.090 & 0.750$\pm$0.039 & 0.793$\pm$0.064 & 0.388$\pm$0.135 & \textbf{0.890$\pm$0.023} & 0.928$\pm$0.008 & 0.944$\pm$0.006 & 0.953$\pm$0.004 \\
PLS & \textbf{0.384$\pm$0.116} & 0.712$\pm$0.044 & 0.879$\pm$0.021 & 0.933$\pm$0.009 & \textbf{0.956$\pm$0.007} & 0.426$\pm$0.107 & 0.750$\pm$0.035 & 0.910$\pm$0.012 & 0.948$\pm$0.008 & 0.975$\pm$0.002 \\
PCA & 0.041$\pm$0.059 & -0.023$\pm$0.049 & 0.028$\pm$0.021 & 0.040$\pm$0.023 & 0.022$\pm$0.010 & 0.084$\pm$0.049 & -0.031$\pm$0.048 & 0.032$\pm$0.024 & 0.034$\pm$0.024 & 0.033$\pm$0.018 \\
ICA & 0.071$\pm$0.056 & -0.024$\pm$0.042 & -0.006$\pm$0.023 & 0.042$\pm$0.023 & 0.013$\pm$0.015 & 0.080$\pm$0.050 & -0.030$\pm$0.046 & 0.026$\pm$0.023 & 0.030$\pm$0.021 & 0.021$\pm$0.017 \\
Autoencoder & 0.058$\pm$0.083 & 0.019$\pm$0.042 & 0.043$\pm$0.030 & 0.002$\pm$0.026 & 0.007$\pm$0.012 & 0.066$\pm$0.072 & 0.041$\pm$0.039 & 0.039$\pm$0.031 & 0.015$\pm$0.021 & 0.011$\pm$0.013 \\
Raw $X$, $h(X)$ & -0.029$\pm$0.061 & 0.091$\pm$0.043 & 0.026$\pm$0.032 & 0.034$\pm$0.028 & 0.021$\pm$0.024 & 0.266$\pm$0.060 & 0.362$\pm$0.053 & 0.283$\pm$0.027 & 0.345$\pm$0.023 & 0.378$\pm$0.015 \\
Raw $X$, $h(X,S)$ & 0.248$\pm$0.053 & 0.317$\pm$0.047 & 0.291$\pm$0.034 & 0.358$\pm$0.021 & 0.387$\pm$0.013 & --- & --- & --- & --- & --- \\
\bottomrule
\end{tabular}}
\caption{Normalized top-20\% policy value on the synthetic data, DML ($\dim(\phi)=10$). Entries are means $\pm$ standard errors over 10 paired data resamples; higher is better. For the $Y$ target, the two raw-$X$ variants coincide, so we report a single value under $h(X)$.}
\label{tab:synth_policy_norm_20_dml}
\end{table}

\subsection{Empirical analysis of representation dimension (\texorpdfstring{$m$}{m})}\label{app:m-sweep-empirical}

We probe the sensitivity of the estimators to the representation dimension
$m=\dim(\phi(X))$ at fixed latent dimensionality $z_{\mathrm{dim}}=10$. By
construction, only the five coordinates in $Z_s$ affect the surrogates and
outcome, so $m=5$ is the smallest dimension that can support exact
sufficiency. We sweep $m\in\{2,3,5,10,20,50\}$ at the canonical setting
$\alpha=\delta=0$ with $n\in\{50,100,250,500,750\}$, using three-fold DML and
both CATE targets; results are means $\pm$ one standard error over the same
10 historical and experimental resamples as the main synthetic
experiment. Raw $X$ does not depend on $m$ and provides a constant
reference.

\begin{figure}[t]
  \centering
  \begin{subfigure}[t]{\textwidth}\centering
    \includegraphics[width=\textwidth]{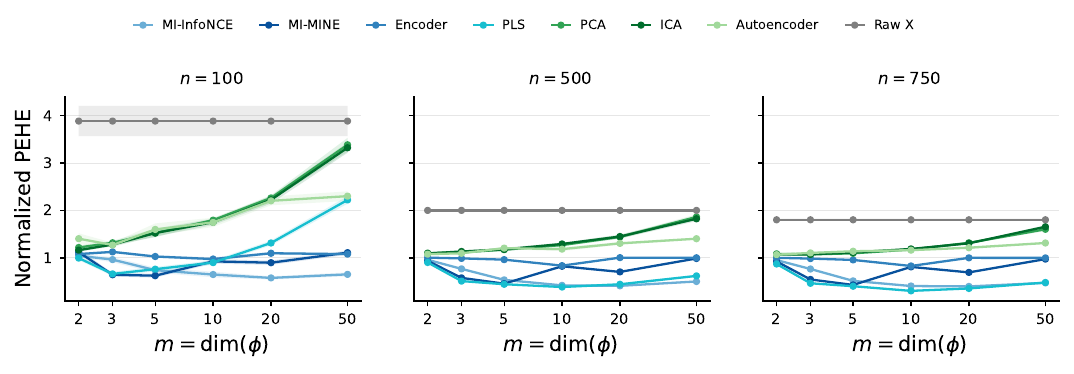}
    \caption{Normalized PEHE, $\hat Y$ target.}
  \end{subfigure}\\[2pt]
  \begin{subfigure}[t]{\textwidth}\centering
    \includegraphics[width=\textwidth]{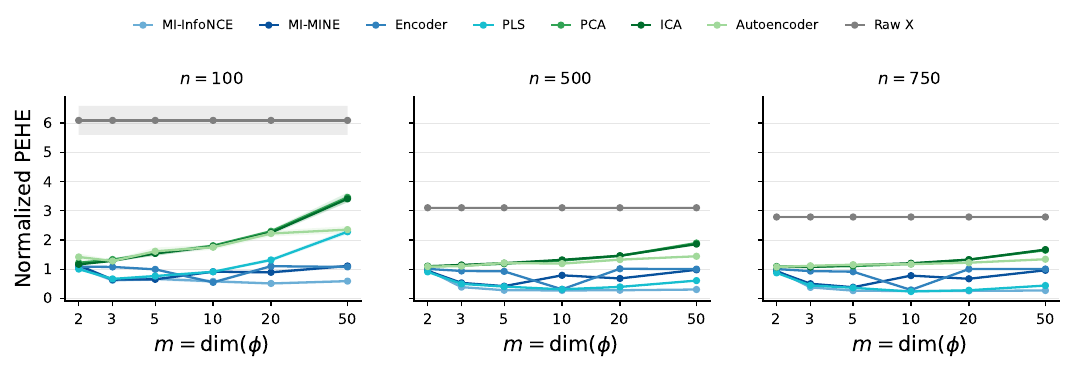}
    \caption{Normalized PEHE, $Y$ target.}
  \end{subfigure}
  \caption{\textbf{Representation dimension sweep, DML --- normalized
    PEHE.} Increasing $m$ through the outcome-relevant latent dimension substantially improves the best supervised methods. Beyond that point, additional dimensions provide no uniform benefit and can increase error for some methods.}
  \label{fig:m-sweep-dml-pehe}
\end{figure}

\begin{figure}[t]
  \centering
  \begin{subfigure}[t]{\textwidth}\centering
    \includegraphics[width=\textwidth]{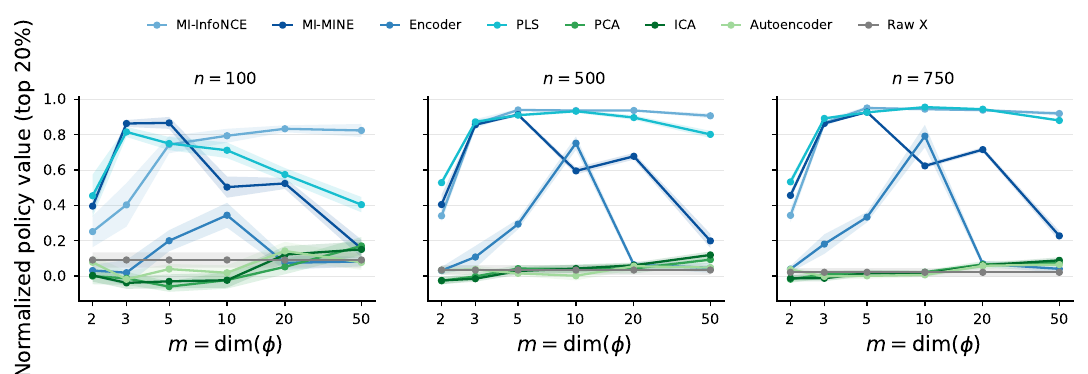}
    \caption{Normalized policy value (top 20\%), $\hat Y$ target.}
  \end{subfigure}\\[2pt]
  \begin{subfigure}[t]{\textwidth}\centering
    \includegraphics[width=\textwidth]{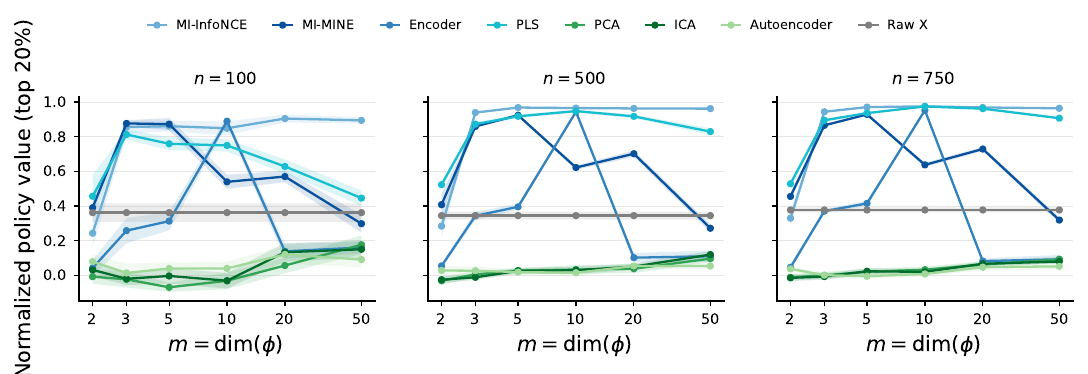}
    \caption{Normalized policy value (top 20\%), $Y$ target.}
  \end{subfigure}
  \caption{\textbf{Representation dimension sweep, DML --- normalized
    top-20\% policy value.} MI-InfoNCE and PLS recover most of the available ranking signal once $m$ is large enough to represent the outcome-relevant latent structure. Unsupervised methods remain near random across dimensions.}
  \label{fig:m-sweep-dml-policy}
\end{figure}

\paragraph{Results.} 
Figures~\ref{fig:m-sweep-dml-pehe} and~\ref{fig:m-sweep-dml-policy} report normalized PEHE and policy value across representation dimensions, respectively. The results show a familiar underfitting vs. estimation tradeoff. At $n=500$, increasing $m$ from 2 to 5 lowers MI-InfoNCE's normalized PEHE from $0.961$ to $0.538$ and raises its policy value from $0.341$ to $0.940$. Increasing $m$ beyond 5 provides no uniform gain: MI-InfoNCE remains consistently good at $m=10$ and $20$, whereas PLS begins to degrade at $m=50$ and the prediction encoder and MI-MINE are more sensitive to the choice of dimension. The main choice $m=10$ therefore lies within the effective range for the best-performing methods, but increasing representation dimensions has diminishing marginal returns, and for some methods, at small sample sizes, is even detrimental.

\subsection{Semi-synthetic (Medical) dataset}\label{app:semi_synthetic}
\subsubsection{Construction}
\paragraph{Data disclosure and integrity checks.}
The empirical records underlying this semi-synthetic dataset were anonymized before access and were obtained under IRB approval. We use them to obtain covariate, diagnosis, and outcome distributions. We synthetically generate treatment assignment and potential outcomes. 

\paragraph{Cohorts and covariates.}
The historical (EHR) cohort $H$ has $N_H=60{,}248$ samples. The experimental pool (mobile-app cohorts) $E$ contains $N_E=11{,}747$ samples. We retain covariates $X$ after excluding identifiers, variables that may lead to leakage of targets, features with excessive missingness, and all 37 diagnosis trajectories used to construct $S$. The resulting $X\in\mathbb{R}^{340}$ spans maternal demographics and insurance; obstetric history and pregnancy characteristics; prenatal measurements and health behaviors; diagnoses recorded across pregnancy; and emergency, inpatient, prenatal, and behavioral-health utilization. We use an $\ell_1$-logistic model fit on $H$ to identify 40 relevant coordinates with high signal used to parameterize the semi-synthetic outcome and treatment mechanisms.

\paragraph{Auxiliary outcomes.}
We define a pre-specified pool of 56 diagnosis and problem-list trajectories from the first, second, and third trimesters and the post-baseline/pre-postpartum window. These include anxiety, depression, hypertension, gestational hypertension, substance-use and behavioral diagnoses, bipolar disorder, obsessive-compulsive disorder, trauma reactions, diabetes, autoimmune conditions, cardiomyopathy, kidney and liver conditions, and gestational diabetes. To deal with missingness, we retain only 37 outcomes whose prevalence in $H$ lies between $0.5\%$ and $99.5\%$. They are standardized using $H$ only, after which we fit PCA to obtain the surrogates. The first five components define $S\in\mathbb{R}^{5}$ and explain $41.3\%$ of the historical outcome variance. These auxiliary outcomes are observed in both cohorts.

\paragraph{Synthetic treatment and potential outcomes.}
The baseline outcome score $\mu_X$ and auxiliary-outcome direction $\beta_S$ are fit using $H$ only. Let $S_i(0)$ denote the observed untreated auxiliary-outcome representation after a shared measurement-noise draw. Treatment shifts it along the outcome-relevant direction,
\[
\tau_S(X_i)=g(X_i)\frac{\beta_S}{\lVert\beta_S\rVert_2^2},
\qquad
S_i(t)=S_i(0)+t\,\tau_S(X_i),
\]
where $g(X)$ is a heterogeneous effect score depending on 10 of the 40 selected coordinates and normalized to have standard deviation $0.5$. Let $D_i(t)=1$ denote postpartum depression. Its potential-outcome probability satisfies
\[
\operatorname{logit}P\{D_i(t)=1\mid X_i,S_i(t)\}
=\mu_X(X_i)+\lambda_S S_i(t)^\top\beta_S,
\qquad \lambda_S=2.
\]
We report $Y_i(t)=1-D_i(t)$, so positive treatment effects correspond to a reduced probability of postpartum depression. The same equation is used for $H$ and $E$. Experimental treatment is $T\sim\operatorname{Bernoulli}(1/2)$, independently of $X$, and the observed pair is $(S_i(T_i),Y_i(T_i))$. Since $T$ affects $Y$ only through $S(T)$, surrogacy holds by construction. Using the corresponding noisy potential surrogate in each potential-outcome equation avoids a measurement-error-induced direct association between $T$ and $Y$ conditional on the observed $S$.

\paragraph{Methods and evaluation.}
We standardize $X$ using $H$, then train representations and 50-tree random-forest outcome prediction models on an i.i.d. historical sample of size $10{,}000$. Neural methods use the encoder $340\to64\to32\to\dim(\phi)$, Adam with learning rate $10^{-3}$ and batch size 256, and validation-based early stopping over at most 200 epochs. The main analysis uses $\dim(\phi)=10$; $\dim(\phi)=5$ is a sensitivity analysis. We evaluate the prediction encoder, conditional MINE, conditional InfoNCE, PLS, PCA, ICA, an autoencoder, and two raw-$X$ variants whose outcome models fit $h(X)$ or $h(X,S)$. For each seed, we reserve 5,000 of the 11,747 experimental-pool pregnancies for testing and draw every training sample from the disjoint remaining pool. We standardize the CATE features using each experimental training sample, remove constant coordinates, and clip standardized training and test values to $[-5,5]$. We then fit three-fold DML with 40-iteration histogram-gradient-boosted outcome and treatment nuisances and a ridge final stage. Experimental sample sizes are $n\in\{100,250,500,750,1000\}$. We repeat each experiment on 10 random data draws, using the same draws for every method, and report the mean and its standard error. We report normalized PEHE and normalized top-20\% policy value.

\subsubsection{Overlap and representation diagnostics}\label{app:semisynth_assumptions}

Unlike the fully synthetic experiment, the historical and experimental samples come from different empirical cohorts. Thus randomization, surrogacy, and the outcome-model comparability hold by construction, whereas overlap between the two cohorts and the sufficiency of the learned representation need not hold exactly. Hence we empirically assess overlap between $H$ and $E$ and whether raw $X$ retains predictive information not retained by $\phi(X)$. We summarize the empirical assessment in Table~\ref{tab:semisynth-diagnostics}.

\begin{table}[t]
\centering
\small
\begin{tabular}{lccc}
\toprule
\multicolumn{4}{l}{{\textbf{Cohort overlap}}} \\
\hline \hline
Cohort-classification AUC
    & \multicolumn{3}{c}{0.752} \\
NN coverage in $(X_{Y},S)$, $T=0$ / $T=1$
    & \multicolumn{3}{c}{0.940 / 0.951} \\
\midrule
\multicolumn{4}{l}{{\textbf{Representation sufficiency}}} \\
\hline \hline
Representation
    & Outcome $\Delta$ log loss
    & $S(0)$ $\Delta R^2$
    & $S(1)$ $\Delta R^2$ \\
\midrule
Prediction encoder     & 0.0065 & 0.0146 & 0.0379 \\
Conditional MINE       & 0.0089 & 0.0501 & 0.0772 \\
Conditional InfoNCE    & 0.0102 & 0.0568 & 0.0837 \\
PLS                    & 0.0018 & 0.0261 & 0.0443 \\
PCA                    & 0.0083 & 0.1656 & 0.1868 \\
ICA                    & 0.0102 & 0.1653 & 0.1853 \\
Autoencoder            & 0.0104 & 0.1747 & 0.1902 \\
\bottomrule
\end{tabular}
\caption{Diagnostics for cohort overlap and representation sufficiency. The final three columns report the cross-validated improvement from appending raw $X$ to the representation; smaller values indicate that less predictive information remains beyond the representation.}
\label{tab:semisynth-diagnostics}
\end{table}

We assess overlap between the historical and experimental samples using a cross-validated classifier trained to distinguish the two cohorts based on $X$. The classifier has an AUC of $0.752$, indicating some distribution shift. We then compare nearest-neighbor distances in $(X_{Y},S)$, where $X_{Y}\in \mathbb{R}^{40}$ are the covariates used to construct the semi-synthetic outcomes. We do not use the full covariates $X\in \mathbb{R}^{340}$ since Euclidean nearest-neighbor distance becomes increasingly difficult to interpret in higher dimensions. We split our historical samples into a reference set and a held-out set. For each held-out historical sample, we compute its distance to the nearest-neighbor in the reference set. The 95-th percentile of these distances defines our coverage threshold. We consider an experimental sample to be "covered" when its distance to historical reference set does not exceed this threshold. Coverage is $94.0\%$ among controls and $95.1\%$ among treated observations, indicating substantial but imperfect overlap in the variables used by the DGP.

To assess sufficiency (i), we test whether raw $X$ improves prediction of $Y^*$ after conditioning on $(\phi(X),S)$. We compare three-fold cross-validated predictions using $(\phi(X),S)$ and $(\phi(X),S,X)$ and report the resulting reduction in log loss. To assess sufficiency (ii), we similarly compare predictions of $S(0)$ and $S(1)$ using $\phi(X)$ and $(\phi(X),X)$ and report the increase in $R^2$. Both potential surrogates are available by construction in the semi-synthetic experiment. Under exact sufficiency, adding $X$ would provide no population-level improvement. Values near zero therefore indicate that little predictive information remains outside the representation.

For sufficiency (i), adding $X$ reduces outcome log loss by at most $0.0104$ across all representations. For sufficiency-(ii), the improvements for the supervised representations range from $0.0146$ to $0.0837$, compared with $0.1653$ to $0.1902$ for PCA, ICA, and the autoencoder. The prediction encoder retains the most information relevant to the potential surrogates, with PLS close behind it. These improvements are close to zero, particularly for the supervised representations, suggesting that raw $X$ contributes little predictive information beyond $\phi(X)$ and that sufficiency holds approximately in this experiment.

\subsubsection{Results and representation dimension}
Figure~\ref{fig:semisynth_phi10_full} and Tables~\ref{tab:corrected_semisynth_phi10_pehe}--\ref{tab:corrected_semisynth_phi10_policy} report the complete results at $\dim(\phi)=10$. Figure~\ref{fig:semisynth_phi5_full} reports the corresponding results at $\dim(\phi)=5$, while Table~\ref{tab:corrected_semisynth_phi_sensitivity} directly compares the prediction encoder at dimensions 5 and 10.
\begin{figure}[t]
\centering
\includegraphics[width=0.98\linewidth]{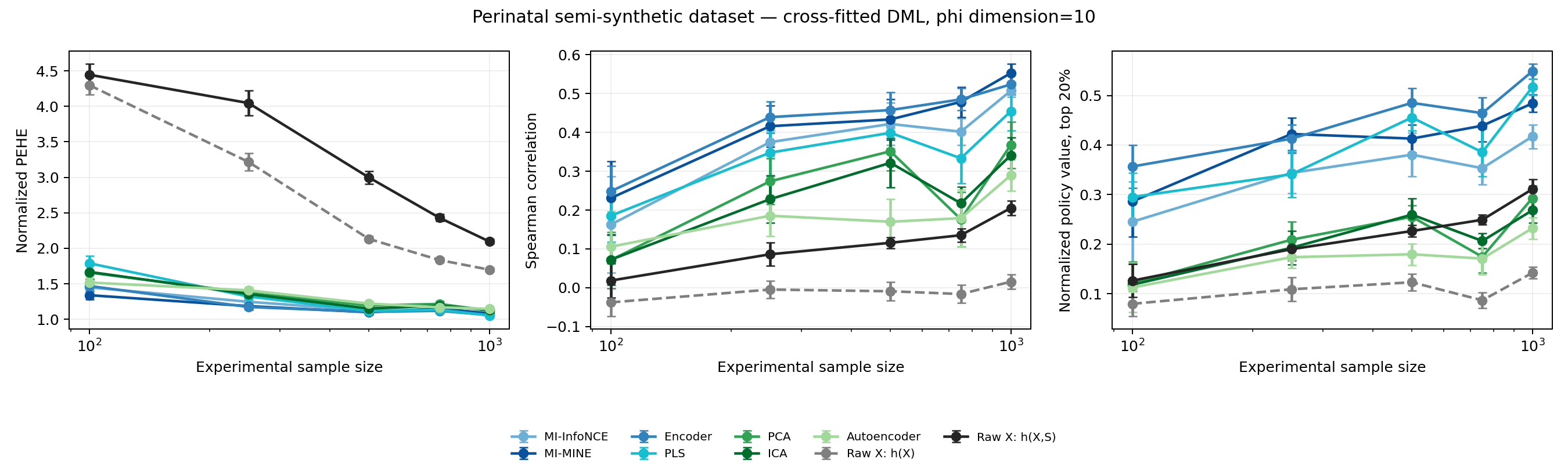}
\caption{The nine methods used in Section~\ref{sec:exp_semisynth} on the perinatal semi-synthetic dataset at $\dim(\phi)=10$. Points show means over 10 runs; error bars show $\pm 1$ standard error. Supervised representations learned from historical outcomes outperform both raw-$X$ variants and generally outperform unsupervised compression.}
\label{fig:semisynth_phi10_full}
\end{figure}

\begin{table}[t]
\centering
\resizebox{\linewidth}{!}{%
\begin{tabular}{lccccc}
\toprule
Method & $n=100$ & $n=250$ & $n=500$ & $n=750$ & $n=1000$ \\
\midrule
MI-InfoNCE & 1.451$\pm$0.076 & 1.247$\pm$0.047 & 1.124$\pm$0.031 & 1.133$\pm$0.020 & 1.090$\pm$0.019 \\
MI-MINE & \textbf{1.340$\pm$0.062} & 1.186$\pm$0.052 & \textbf{1.099$\pm$0.035} & 1.152$\pm$0.039 & 1.082$\pm$0.022 \\
Encoder & 1.470$\pm$0.068 & \textbf{1.173$\pm$0.034} & 1.106$\pm$0.026 & \textbf{1.119$\pm$0.028} & 1.059$\pm$0.017 \\
PLS & 1.790$\pm$0.102 & 1.322$\pm$0.062 & 1.118$\pm$0.030 & 1.135$\pm$0.031 & \textbf{1.055$\pm$0.015} \\
PCA & 1.659$\pm$0.093 & 1.372$\pm$0.061 & 1.193$\pm$0.049 & 1.216$\pm$0.036 & 1.123$\pm$0.022 \\
ICA & 1.668$\pm$0.102 & 1.356$\pm$0.044 & 1.151$\pm$0.046 & 1.192$\pm$0.023 & 1.130$\pm$0.017 \\
Autoencoder & 1.518$\pm$0.065 & 1.408$\pm$0.044 & 1.221$\pm$0.034 & 1.170$\pm$0.023 & 1.146$\pm$0.019 \\
Raw $X$, $h(X)$ & 4.299$\pm$0.137 & 3.217$\pm$0.122 & 2.133$\pm$0.035 & 1.835$\pm$0.033 & 1.699$\pm$0.026 \\
Raw $X$, $h(X,S)$ & 4.446$\pm$0.154 & 4.047$\pm$0.175 & 2.998$\pm$0.093 & 2.431$\pm$0.050 & 2.096$\pm$0.034 \\
\bottomrule
\end{tabular}}
\caption{Normalized PEHE on the perinatal semi-synthetic dataset at $\dim(\phi)=10$. Entries are means $\pm$ standard errors over 10 runs; lower is better.}
\label{tab:corrected_semisynth_phi10_pehe}
\end{table}

\begin{table}[t]
\centering
\resizebox{\linewidth}{!}{%
\begin{tabular}{lccccc}
\toprule
Method & $n=100$ & $n=250$ & $n=500$ & $n=750$ & $n=1000$ \\
\midrule
MI-InfoNCE & 0.245$\pm$0.081 & 0.344$\pm$0.040 & 0.381$\pm$0.043 & 0.353$\pm$0.033 & 0.417$\pm$0.024 \\
MI-MINE & 0.286$\pm$0.072 & \textbf{0.422$\pm$0.033} & 0.413$\pm$0.028 & 0.439$\pm$0.032 & 0.484$\pm$0.017 \\
Encoder & \textbf{0.357$\pm$0.044} & 0.413$\pm$0.028 & \textbf{0.485$\pm$0.029} & \textbf{0.464$\pm$0.031} & \textbf{0.549$\pm$0.015} \\
PLS & 0.295$\pm$0.048 & 0.341$\pm$0.047 & 0.456$\pm$0.027 & 0.386$\pm$0.049 & 0.517$\pm$0.016 \\
PCA & 0.121$\pm$0.043 & 0.209$\pm$0.036 & 0.255$\pm$0.023 & 0.175$\pm$0.032 & 0.293$\pm$0.025 \\
ICA & 0.118$\pm$0.042 & 0.193$\pm$0.034 & 0.260$\pm$0.033 & 0.207$\pm$0.015 & 0.268$\pm$0.025 \\
Autoencoder & 0.113$\pm$0.050 & 0.174$\pm$0.022 & 0.180$\pm$0.022 & 0.171$\pm$0.032 & 0.232$\pm$0.022 \\
Raw $X$, $h(X)$ & 0.080$\pm$0.026 & 0.109$\pm$0.024 & 0.123$\pm$0.017 & 0.087$\pm$0.016 & 0.142$\pm$0.012 \\
Raw $X$, $h(X,S)$ & 0.127$\pm$0.033 & 0.190$\pm$0.023 & 0.227$\pm$0.011 & 0.249$\pm$0.010 & 0.311$\pm$0.020 \\
\bottomrule
\end{tabular}}
\caption{Normalized top-20\% policy value on the perinatal semi-synthetic dataset at $\dim(\phi)=10$. Entries are means $\pm$ standard errors over 10 runs; higher is better.}
\label{tab:corrected_semisynth_phi10_policy}
\end{table}

For the prediction encoder, dimensions 5 and 10 give similar results. The ten-dimensional representation has lower normalized PEHE at four of the five sample sizes, although the differences are small, and neither dimension has uniformly higher policy value. We retain $\dim(\phi)=10$ for consistency with the synthetic experiment and report dimension 5 as a sensitivity analysis.

\begin{figure}[t]
\centering
\includegraphics[width=0.98\linewidth]{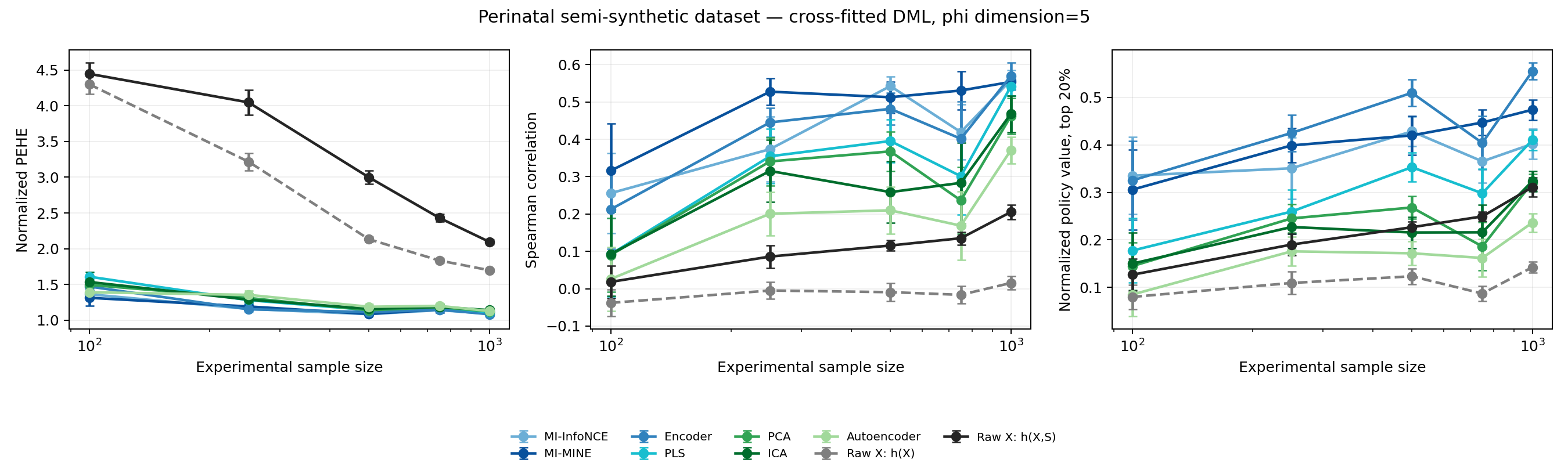}
\caption{The nine methods used in the main comparison at $\dim(\phi)=5$. The prediction encoder gives similar PEHE and policy value at dimensions 5 and 10.}
\label{fig:semisynth_phi5_full}
\end{figure}

\begin{table}[t]
\centering
\begin{tabular}{rcccc}
\toprule
& \multicolumn{2}{c}{Normalized PEHE} & \multicolumn{2}{c}{Policy@20\%} \\
$n$ & $\dim(\phi)=5$ & $\dim(\phi)=10$ & $\dim(\phi)=5$ & $\dim(\phi)=10$ \\
\midrule
100 & 1.471$\pm$0.105 & 1.470$\pm$0.068 & 0.325$\pm$0.083 & 0.357$\pm$0.044 \\
250 & 1.153$\pm$0.025 & 1.173$\pm$0.034 & 0.425$\pm$0.039 & 0.413$\pm$0.028 \\
500 & 1.114$\pm$0.029 & 1.106$\pm$0.026 & 0.510$\pm$0.028 & 0.485$\pm$0.029 \\
750 & 1.148$\pm$0.029 & 1.119$\pm$0.028 & 0.404$\pm$0.056 & 0.464$\pm$0.031 \\
1000 & 1.084$\pm$0.025 & 1.059$\pm$0.017 & 0.555$\pm$0.018 & 0.549$\pm$0.015 \\
\bottomrule
\end{tabular}
\caption{Representation-dimension sensitivity for the prediction encoder on the perinatal semi-synthetic dataset. Entries are means $\pm$ standard errors over 10 runs.}
\label{tab:corrected_semisynth_phi_sensitivity}
\end{table}

\subsubsection{True-outcome sensitivity}
Figure~\ref{fig:semisynth_phi10_true_y} and Tables~\ref{tab:corrected_semisynth_phi10_pehe_true_y}--\ref{tab:corrected_semisynth_phi10_policy_true_y} report results using the true experimental outcome $Y$. To separate representation quality from outcome-prediction error, we repeat the same out-of-sample DML evaluation using the true experimental $Y$ directly. At $n=100$, conditional MINE has the lowest normalized PEHE ($1.717\pm0.108$), compared with $6.833\pm0.323$ for raw $X$. At $n=1000$, PLS and the prediction encoder are nearly tied ($1.059\pm0.039$ and $1.063\pm0.023$), while raw $X$ remains at $3.414\pm0.062$. The advantage of low-dimensional supervised representations therefore persists when the true experimental outcome, rather than $\widehat Y$, is used as the CATE target.

\begin{figure}[t]
\centering
\includegraphics[width=0.98\linewidth]{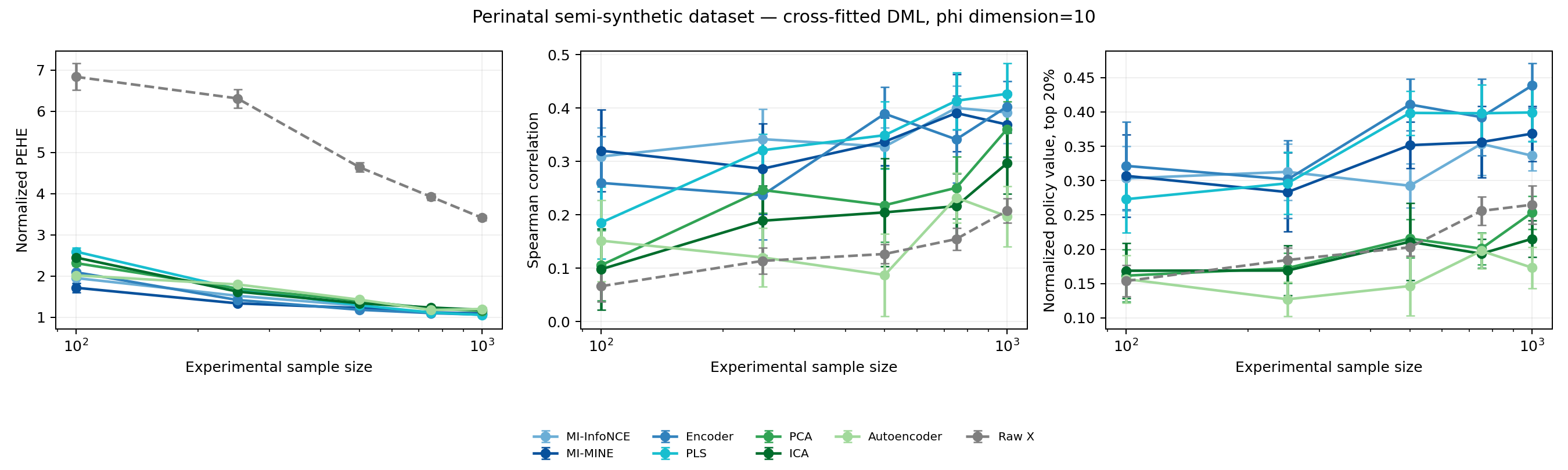}
\caption{The eight distinct methods at $\dim(\phi)=10$ using true experimental $Y$ as the CATE target. The $h(X,S)$ raw-$X$ variant is omitted because it is identical to raw $X$ when the true outcome is used directly. Points show means over 10 runs; error bars show $\pm 1$ standard error.}
\label{fig:semisynth_phi10_true_y}
\end{figure}

\begin{table}[t]
\centering
\resizebox{\linewidth}{!}{%
\begin{tabular}{lccccc}
\toprule
Method & $n=100$ & $n=250$ & $n=500$ & $n=750$ & $n=1000$ \\
\midrule
MI-InfoNCE & 1.952$\pm$0.117 & 1.522$\pm$0.077 & 1.282$\pm$0.041 & \textbf{1.096$\pm$0.026} & 1.104$\pm$0.043 \\
MI-MINE & \textbf{1.717$\pm$0.108} & \textbf{1.339$\pm$0.051} & 1.224$\pm$0.059 & 1.121$\pm$0.037 & 1.088$\pm$0.023 \\
Encoder & 2.094$\pm$0.153 & 1.420$\pm$0.078 & \textbf{1.183$\pm$0.032} & 1.102$\pm$0.023 & 1.063$\pm$0.023 \\
PLS & 2.594$\pm$0.088 & 1.686$\pm$0.140 & 1.292$\pm$0.058 & 1.116$\pm$0.023 & \textbf{1.059$\pm$0.039} \\
PCA & 2.315$\pm$0.172 & 1.698$\pm$0.093 & 1.400$\pm$0.031 & 1.214$\pm$0.027 & 1.157$\pm$0.035 \\
ICA & 2.451$\pm$0.168 & 1.624$\pm$0.096 & 1.340$\pm$0.039 & 1.237$\pm$0.031 & 1.181$\pm$0.037 \\
Autoencoder & 2.018$\pm$0.076 & 1.797$\pm$0.060 & 1.427$\pm$0.035 & 1.179$\pm$0.057 & 1.202$\pm$0.029 \\
Raw $X$ & 6.833$\pm$0.323 & 6.309$\pm$0.225 & 4.638$\pm$0.113 & 3.922$\pm$0.080 & 3.414$\pm$0.062 \\
\bottomrule
\end{tabular}}
\caption{Normalized PEHE on the perinatal semi-synthetic dataset at $\dim(\phi)=10$ using true experimental $Y$. Entries are means $\pm$ standard errors over 10 runs; lower is better.}
\label{tab:corrected_semisynth_phi10_pehe_true_y}
\end{table}

\begin{table}[t]
\centering
\resizebox{\linewidth}{!}{%
\begin{tabular}{lccccc}
\toprule
Method & $n=100$ & $n=250$ & $n=500$ & $n=750$ & $n=1000$ \\
\midrule
MI-InfoNCE & 0.303$\pm$0.047 & \textbf{0.313$\pm$0.041} & 0.293$\pm$0.032 & 0.354$\pm$0.046 & 0.336$\pm$0.022 \\
MI-MINE & 0.307$\pm$0.060 & 0.283$\pm$0.057 & 0.352$\pm$0.034 & 0.356$\pm$0.052 & 0.368$\pm$0.040 \\
Encoder & \textbf{0.321$\pm$0.064} & 0.302$\pm$0.056 & \textbf{0.411$\pm$0.038} & 0.392$\pm$0.056 & \textbf{0.438$\pm$0.032} \\
PLS & 0.273$\pm$0.049 & 0.296$\pm$0.045 & 0.399$\pm$0.032 & \textbf{0.398$\pm$0.042} & 0.399$\pm$0.043 \\
PCA & 0.162$\pm$0.038 & 0.173$\pm$0.022 & 0.216$\pm$0.028 & 0.201$\pm$0.023 & 0.253$\pm$0.024 \\
ICA & 0.169$\pm$0.040 & 0.169$\pm$0.036 & 0.211$\pm$0.056 & 0.193$\pm$0.021 & 0.215$\pm$0.026 \\
Autoencoder & 0.157$\pm$0.035 & 0.127$\pm$0.025 & 0.146$\pm$0.043 & 0.198$\pm$0.026 & 0.173$\pm$0.030 \\
Raw $X$ & 0.154$\pm$0.023 & 0.184$\pm$0.018 & 0.203$\pm$0.013 & 0.256$\pm$0.021 & 0.265$\pm$0.028 \\
\bottomrule
\end{tabular}}
\caption{Normalized top-20\% policy value on the perinatal semi-synthetic dataset at $\dim(\phi)=10$ using true experimental $Y$. Entries are means $\pm$ standard errors over 10 runs; higher is better.}
\label{tab:corrected_semisynth_phi10_policy_true_y}
\end{table}

\clearpage

\end{document}